\pdfoutput=1

\documentclass[journal]{IEEEtran}

\usepackage[T1]{fontenc}
\usepackage[protrusion=true,expansion=false,final]{microtype}
\usepackage{balance}
\usepackage{amsmath,amssymb,amsfonts}
\usepackage{amsthm}
\usepackage[varg]{newtxmath}
\usepackage{graphicx}
\usepackage{array}
\usepackage{booktabs}
\usepackage{multirow}
\usepackage{url}
\usepackage{xcolor}
\usepackage{tikz}
\usetikzlibrary{arrows.meta,positioning,calc}
\usepackage[ruled,vlined,linesnumbered]{algorithm2e}
\SetAlgoCaptionLayout{textbf}
\SetKwComment{Comment}{$\triangleright$\ }{}
\SetKwInOut{Input}{\textbf{Input}}
\SetKwInOut{Output}{\textbf{Output}}
\usepackage[colorlinks=true,linkcolor=blue,citecolor=blue,urlcolor=blue,
            filecolor=blue]{hyperref}
\usepackage{float}

\newcommand{\includefig}[2][]{%
  \includegraphics[#1]{#2}%
}

\newcommand{\proc}[1]{\textnormal{\textsc{#1}}}

\newcommand{\hyb}{-\hspace{0pt}}

\definecolor{cground}{HTML}{999999}
\definecolor{cair}{HTML}{0072B2}
\definecolor{cland}{HTML}{009E73}
\definecolor{cgate}{HTML}{D55E00}

\newtheorem{theorem}{Theorem}
\newtheorem{proposition}{Proposition}
\newtheorem{corollary}{Corollary}

\newcommand{\Fset}{\mathcal{F}}
\newcommand{\CL}{\mathcal{C}_L}

\newlength{\nomlabelwd}
\newlength{\nomlabelsep}
\newcommand{\nomsetwidest}[1]{%
  \settowidth{\dimen0}{#1}%
  \ifdim\dimen0>\nomlabelwd \nomlabelwd=\dimen0\fi
}
\newenvironment{nomenclature}{%
  \nomlabelwd=0pt
  \nomsetwidest{$K_\theta,K_\phi,K_d$}%
  \nomsetwidest{$\omega_{\theta0},\omega_{\phi0}$}%
  \nomsetwidest{$\delta,\delta_{\mathrm{roll}}^{\max}$}%
  \nomsetwidest{$\omega_{\max},\omega_{\min}$}%
  \nomsetwidest{$\omega_{\mathrm{target}}$}%
  \nomsetwidest{$v_{\max}(d)$}%
  \nomsetwidest{$v_{\mathrm{crit}},v_{\min}$}%
  \nomsetwidest{$z_{\mathrm{lm}},\bar\omega_w$}%
  \settowidth{\nomlabelsep}{mmmm}%
  \begin{list}{}{%
    \setlength{\labelwidth}{\nomlabelwd}%
    \setlength{\labelsep}{\nomlabelsep}%
    \setlength{\leftmargin}{\nomlabelwd}%
    \addtolength{\leftmargin}{\nomlabelsep}%
    \setlength{\itemsep}{0pt}%
    \setlength{\parsep}{0pt}%
    \setlength{\topsep}{0pt}%
  }%
}{%
  \end{list}%
}

\begin{document}

\title{DART: Dual-Axis Airborne Reachability-Gated Torque-Reaction
for Off-Road Vehicle Jumps}

\author{Yu Hu\textsuperscript{1,2,*}, Fangzhou Zhao\textsuperscript{1}, Mingyuan Sang\textsuperscript{1,2}, Chen Min\textsuperscript{1}, Liang Chen\textsuperscript{1}, Wei Li\textsuperscript{1,2},\\
Wenyu Kuang\textsuperscript{3}, Shican Chen\textsuperscript{3}, Jinwei Li\textsuperscript{3}, Baolei Chen\textsuperscript{3}
\thanks{\textsuperscript{1}Research Center for Intelligent Computing Systems, Institute of Computing Technology, CAS, Beijing 100190, China.}
\thanks{\textsuperscript{2}School of Computer Science and Technology, University of Chinese Academy of Sciences, Beijing 100049, China.}
\thanks{\textsuperscript{3}Dong Feng Off-Road Vehicle Co., Ltd, Shiyan 442000, China.}
\thanks{\textsuperscript{*}Corresponding author: Yu Hu (e-mail: \href{mailto:huyu@ict.ac.cn}{huyu@ict.ac.cn}).}}

\maketitle

\begin{abstract}
Traversing crests, ledges, and ditches at high speed often launches vehicles into the air, and a mishandled landing presents a substantial crash hazard. We show that the airborne phase is barely controllable: on a \mbox{1383\,kg} platform the wheel angular-momentum budget caps the recoverable pitch-rate change at roughly $9$--$13^\circ$/s in the tighter nose-up direction under drive at typical takeoff wheel speeds, and at about twice that in the reverse-inclusive braking direction; driving the wheels to their drivetrain hard limit raises the measured nose-up ceiling to only $16$--$18^\circ$/s. Takeoff pitch-rate disturbances beyond this directional budget are physically unrecoverable in flight, so the decisive leverage lies before takeoff. DART (Dual-Axis Airborne Reachability-Gated Torque-Reaction) back-propagates the landing constraint into a closed-form certified feasible-takeoff set, which supplies a conservative go/no-go condition and a pre-takeoff speed-shaping law. In flight, DART regulates pitch and roll via steer-resolved wheel-reaction torque, governed by a per-flight roll latch derived from the yaw-coupling analysis. In deterministic full-scale simulation in BeamNG.tech, a calibrated pre-takeoff speed regulator reduces touchdown speed by $36\%$ and raises on-target landings from 0/30 to 30/30. Under the same steep-lip approach the airborne law completes 29/30 safe landings under crash-avoidance bounds versus 0/30 for reaction-wheel-style PD (RW-PD) and time-optimal bang-bang (TOBB). On banked run-ups DART holds the median pitch error at or below $2^\circ$ at every cross-slope, with the largest baseline separation at $\gamma{=}12^\circ$. Across disturbance regimes, the latch preserves pitch-only allocation on low-disturbance entries and enables dual-axis control when roll becomes binding. All results are from simulation; hardware validation remains open.
\end{abstract}

\begin{IEEEkeywords}
Field robots, motion control, wheeled robots, torque vectoring, airborne
attitude control, reachability analysis, electric vehicles.
\end{IEEEkeywords}

\section*{Nomenclature}
\begin{nomenclature}
\item[$\theta,\phi,\psi$] Body pitch, roll, yaw.
\item[$\omega_\theta,\omega_\phi$] Body pitch and roll rates.
\item[$v_z$] Body-frame vertical velocity.
\item[$\theta_0,\phi_0,v_0$] Takeoff pitch, roll, and speed.
\item[$\omega_{\theta0},\omega_{\phi0}$] Takeoff pitch and roll rates.
\item[$\omega_{i,0}$] Takeoff spin speed of wheel $i$.
\item[$I_{yy},I_{xx}$] Body pitch and roll inertias.
\item[$I_i,\omega_i,\tau_i$] Wheel-$i$ inertia, spin rate, torque.
\item[$\omega_{\max},\omega_{\min}$] Wheel spin-rate bounds.
\item[$\tau_{\max}$] Per-wheel drive/braking torque bound.
\item[$r,m$] Wheel radius; vehicle mass.
\item[$n_w,n_f,n_r$] Driven, front-driven, rear-driven counts.
\item[$B$] Directional budget, in body-rate units.
\item[$B_\uparrow,B_\downarrow$] Nose-up drive and braking budgets.
\item[$B_r$] Direction-aware margin after nulling.
\item[$B_{\mathrm{eff}}$] Pitch budget under sustained roll.
\item[$B_{\mathrm{sym}}$] Zero-steer symmetric budget.
\item[$H_f,H_r$] Front-/rear-axle momentum budgets.
\item[$p_r$] Rear-axle pitch-only contribution.
\item[$T$] Airborne flight time.
\item[$T_c$] Correction time left after nulling.
\item[$a_\theta$] Pitch angular-acceleration bound.
\item[$e_0,r_\theta$] Initial rate error; residual pitch.
\item[$\Pi_\theta$] Projection onto $(\theta_0,\omega_{\theta0})$.
\item[$\CL$] Landing terminal constraint set.
\item[$\Fset^\star$] Exact backward-reachable takeoff set.
\item[$\Fset,\Fset_\theta$] Certified takeoff set; its pitch slice.
\item[$v_{\max}(d)$] Admissible speed at run-up distance $d$.
\item[$v_{\mathrm{crit}},v_{\min}$] Certified speed-window bounds.
\item[$\sigma$] Takeoff jitter (deg; m/s on speed).
\item[$\mathcal{B}_\rho$] Takeoff uncertainty ball, radius $\rho$.
\item[$\delta,\delta_{\mathrm{roll}}^{\max}$] Front steering angle; roll-steer saturation.
\item[$K_\theta,K_\phi,K_d$] Pitch, roll, and pitch-rate gains.
\item[$\phi_{\mathrm{db}},\phi_{\mathrm{on}}$] Roll deadband; latch threshold.
\item[$w_{\mathrm{roll}}$] Per-flight roll latch state.
\item[$\omega_{\theta,\max}$] Pitch-rate reference bound.
\item[$\tau_{\mathrm{spin}}$] Roll spin torque demand.
\item[$(\theta,\phi,\omega)_{\mathrm{target}}$] Touchdown targets: pitch, roll, rate.
\item[$n_{\mathrm{ph}},n^{\ast}$] No-contact samples: P0$\to$P1; P1 gating.
\item[$z_{\mathrm{lm}},\bar\omega_w$] Land-match height; spin-up guard.
\item[$\alpha,\beta,\gamma$] Ramp, landing, cross-slope angles.
\item[$\theta_{\mathrm{surf}}$] Landing-surface tangent at touchdown.
\item[$a_{\mathrm{brake}}$] Total decelerating specific force.
\end{nomenclature}

\section{Introduction}
\label{sec:intro}

\begin{figure*}[!t]
\centering
\includefig[width=\textwidth]{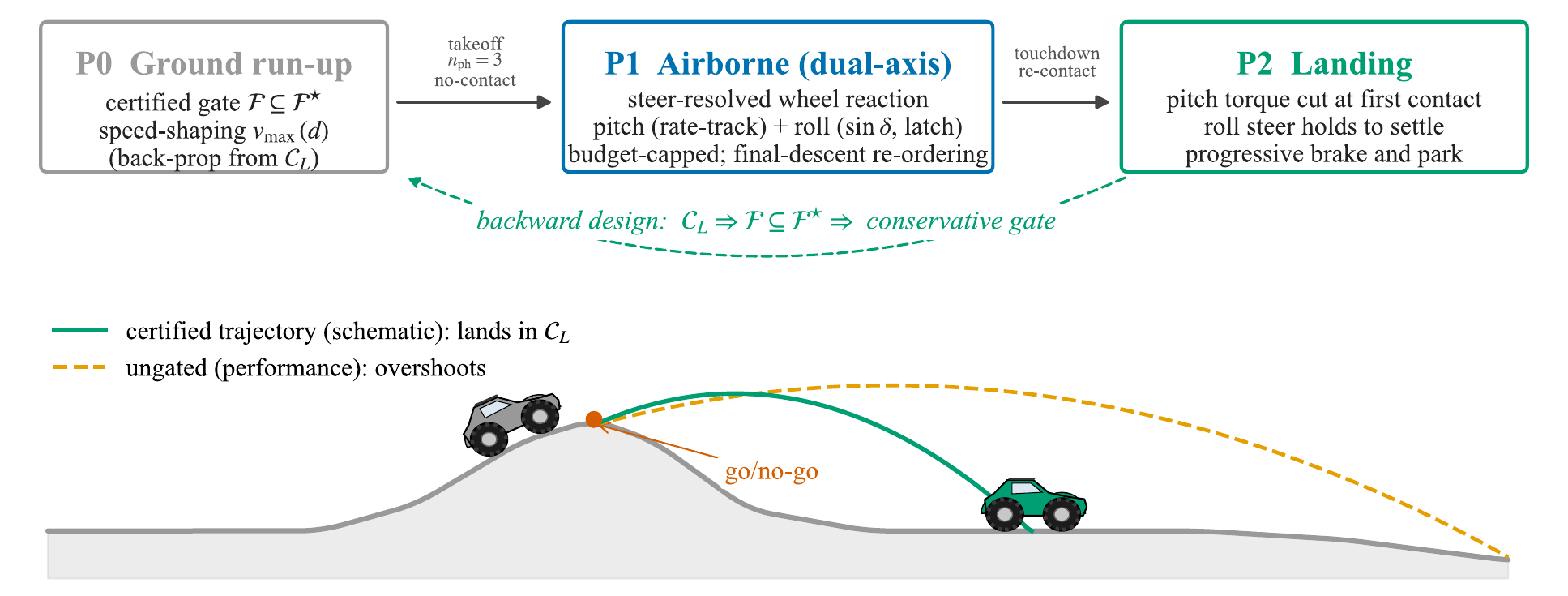}
\caption{DART overview. P0 gates and shapes the run-up using the certified
takeoff set, P1 regulates airborne pitch and roll and re-orders wheel speed
in the final descent, and P2 cuts pitch torque at first contact while roll
steer settles. The dashed green arrow denotes backward design from
$\CL$; the lower trajectories schematically contrast ungated and certified
takeoffs.}
\label{fig:overview}
\end{figure*}

Rally racing, rapid traversal, and time-critical response often require vehicles
to cross crests, ledges, and ditches without slowing to a crawl or detouring.
At high speed, a mishandled landing can cause a severe crash. The same terrain
feature poses little risk when traversed slowly, which may
explain why this high-speed failure mode remains comparatively underexplored.
During a ledge crossing, roughly one second of flight can contain the entire
safety margin because a pitch error of only a few degrees can turn a four-wheel
landing into a nose-first impact, rollover, or suspension bottom-out. Unlike
ground-phase maneuvers, the airborne phase offers no translational recourse
because, once the wheels leave the launch lip, the takeoff state fixes the
center-of-mass trajectory as a ballistic parabola. This leaves attitude as
the sole actuatable degree of freedom, accessible only through reaction
torques from accelerating or regeneratively braking individual wheels.

The deeper difficulty is not that the airborne phase is hard to control, but
that it is barely controllable. The wheel angular-momentum budget caps
recoverable body pitch-rate change at about $9$--$13^\circ$/s for a 1383\,kg
platform in the nose-up direction under drive, roughly one-tenth of
the motor-torque-rate figure; this is the direction that must arrest
the nose-down disturbances dominating steep takeoffs. The braking direction, extending through zero
into reverse, affords about twice this budget. No increase in
motor aggressiveness can undo a large takeoff pitch-rate disturbance once
airborne because wheel angular momentum, not motor torque, bounds the
available correction. This limit shifts the decisive leverage to the ground
phase, where the takeoff state can be shaped into the certified feasible set
$\Fset\subseteq\Fset^\star$. A verifiable sufficient condition can then
propagate the landing constraint backward, guide takeoff-state shaping, and
issue a conservative no-go when no certified state
remains within the run-up horizon.

Existing approaches fall into three main categories, each carrying a structural gap. In-air attitude controllers repurpose throttle and steering to reorient the body after launch; the closest examples are AGRO~\cite{gonzalez2020agro} and the recent in-air planner of Pokhrel \emph{et al.}~\cite{pokhrel2025dom}, yet neither derives the control-authority limit nor feeds the landing constraint back to the ground phase. Pre-takeoff trajectory and speed planners treat the airborne phase as an open-loop ballistic arc and encode jump risk only as a soft planning cost, rather than as a hard, verifiable feasibility gate. Driveline-protection methods deliberately disengage the drivetrain when a jump is detected and re-engage it at touchdown; in-flight wheel-speed modulation, where retained, serves only landing-speed synchronization, so the wheel-reaction authority for attitude control goes unused~\cite{us12233882}.

As shown in Fig.~\ref{fig:overview}, DART integrates pre-takeoff reachability
gating with airborne attitude control in a cross-phase framework. Before
launch, DART back-propagates the landing constraint into a certified
feasible-takeoff set that supplies a conservative go/no-go gate and a
pre-takeoff speed-shaping law. In flight, DART regulates pitch and roll via
steer-resolved wheel-reaction torque. We evaluate the integrated framework in
controlled full-scale simulation. Our contributions are as follows.

\begin{enumerate}
\item \textbf{Momentum-budget reachability theory for wheel-reaction airborne attitude.}
Beyond the $\sin\delta$/$\cos\delta$ Jacobian and stabilizing laws of prior
work, Theorem~\ref{thm:budget} establishes an impossibility bound.
Theorem~\ref{thm:frontier} derives a conservative certified two-axis
control-authority frontier. Theorem~\ref{thm:feasible} provides a
constructive closed-form feasibility certificate.
Corollary~\ref{cor:backprop} back-propagates
this certificate to yield the pre-takeoff speed-shaping law. Direction-resolved
tests locate the budget's binding onset where the closed-form formula
predicts it. This
reframes the problem from rescuing attitude in flight to shaping the takeoff
state into the certified set.

\item \textbf{Certified pre-takeoff gating and the integrated cross-phase chain.}
We deploy the feasibility certificate as a conservative go/no-go gate and a
calibrated one\hyb dimensional speed regulator, closing the loop from the landing
constraint back to ground-phase actuation. In end-to-end simulation on a
single full-scale vehicle, the regulator reduces touchdown speed by 36\% and
raises on-target landings from 0/30 to 30/30 in the evaluated
steep-lip scenario. When the certified speed window
is empty, the gate refuses the jump and stops the vehicle before it reaches
the lip.

\item \textbf{Regime-conditional dual-axis airborne control.}
DART regulates airborne pitch and roll through steer-resolved wheel-reaction
torque and uses a per-flight roll latch to implement the
allocation predicted by Theorem~\ref{thm:frontier}. The latch preserves
pitch-only behavior on \emph{benign} entries, those combining small takeoff
roll with a mild pitch rate, and activates dual-axis control when roll is
binding.
On a 1383\,kg platform in paired full-scale simulation, DART outperforms
reaction-wheel-style PD (RW-PD) on pitch across the approach experiments and
the stressed $\alpha{=}20^\circ$ air-impulse condition. The resulting regime map links
controller advantage to pitch-rate, combined pitch--roll, and yaw-coupled
disturbances.
\end{enumerate}

The remainder of the paper is organized as follows. Section~\ref{sec:related} reviews related work. Section~\ref{sec:method} develops the DART framework and its momentum-budget reachability theory. Section~\ref{sec:setup} describes the validation platform and protocol. Section~\ref{sec:results} proceeds from theory validation through decisive comparisons and generalization to regime synthesis. Section~\ref{sec:limits} discusses scale dependence, open issues, and sim-to-real gaps, and Section~\ref{sec:conclusion} concludes.

\section{Related Work}
\label{sec:related}

Research relevant to jump decisions and airborne reorientation spans
pre-takeoff planning and gating, and in-flight attitude control. We review them in
takeoff-to-air order and relate each line to the corresponding DART component.

\subsection{Pre-Takeoff Planning and Reachability Gating}
The most direct way to make a jump safe is to plan the takeoff trajectory and
speed beforehand, so that the resulting ballistic arc clears the terrain and
meets the landing surface. Sampling-based model-predictive path-integral
(MPPI) control~\cite{williams2016mppi,williams2018itmpc} plans the full
trajectory before takeoff, and terrain-aware variants predict the passive
ballistic segment and landing impact within the pre-takeoff
rollout~\cite{lee2023mppi}. Recent work learns, from terrain geometry, when
a jump is worthwhile~\cite{zhao2026jump}. The broader learning-based MPC
literature supplies the robustness guarantees~\cite{gandhi2021robustmppi}
and learned-dynamics models~\cite{hewing2020lbmpc} on which such planners
build. Two structural
properties distinguish this line from hard feasibility gating. The
planned air segment is an open-loop gravity arc with no active attitude
control, so the plan can only shape when and how fast the vehicle
leaves the lip, not how it rotates in flight. Moreover, jump risk typically
enters as a soft planning cost that competes against a progress reward inside
the sampling optimizer, whereas a reachability gate certifies constructive
feasibility by establishing $x_0\in\Fset$, which is a sufficient condition
for $x_0\in\Fset^\star$, or conservatively enforces the deceleration implied by
the terminal landing constraint. DART's gate addresses the complementary
case by shaping the requested takeoff speed to the landing geometry as a go/no-go
condition rather than an optimizer penalty.

Inspired by formal safety-certificate methods, we design DART's pre-takeoff
gate around the exact backward-reachable preimage $\Fset^\star$ and a
closed-form certified inner subset $\Fset\subseteq\Fset^\star$ for the airborne
double integrator under the wheel momentum budget. Hamilton-Jacobi reachability
computes backward-reachable sets of states from which a system can reach or
avoid a target despite bounded
disturbances~\cite{mitchell2005,bansal2017hj,chen2018hj}, while control barrier
functions enforce forward invariance of a safe set, typically through a
quadratic-program safety filter that minimally edits a nominal
input~\cite{ames2019cbf}. DART departs from both templates. Where the general
Hamilton--Jacobi program represents the reachable set by a grid-based value
function, this sufficient certificate is forward evaluable in closed form;
and where a barrier-function filter edits the in-flight input, the gate acts
before takeoff, where the bounded airborne authority can still be respected.

\begin{table*}[!tp]
\centering
\caption{Feature comparison of pre-takeoff planners and wheel-reaction airborne controllers.}
\label{tab:positioning}
\begin{tabular}{@{}lccccc@{}}
\toprule
 & MPPI & AGRO & DOM & Kim \emph{et al.} & DART \\
 & planners & \cite{gonzalez2020agro} & \cite{pokhrel2025dom} & \cite{kim2021comparing} & (ours) \\
\midrule
Airborne attitude control & --- & \checkmark & \checkmark & \checkmark & \checkmark \\
Dual-axis (pitch $+$ roll) & --- & \checkmark & \checkmark & \checkmark & \checkmark \\
Authority (momentum) bound & --- & --- & --- & --- & \checkmark \\
Closed-form certified takeoff set & --- & --- & --- & --- & \checkmark \\
Landing$\to$takeoff feedback & soft cost & --- & --- & --- & analytic sufficient gate \\
Mass regime & varies & RC & RC & idealized & 1383\,kg (sim) \\
\bottomrule
\end{tabular}
\end{table*}

\subsection{Momentum-Exchange Attitude Control}
Classical methods for reorienting a body in free fall by exchanging angular
momentum include spacecraft reaction wheels and control-moment
gyros~\cite{wertz1978}, reaction-wheel-stabilized jumping and balancing
cubes~\cite{gajamohan2012cubli}, inertial tails for mid-air attitude
correction~\cite{libby2012tail,changsiu2013tail}, and the falling-cat
reflex~\cite{kane1969cat}. Among legged platforms that perform jumps and
aerial maneuvers, the MIT Cheetah family first realized running
jumps over obstacles through online pre-takeoff planning
\cite{park2015jumps,nguyen2019cheetah}, then $360^\circ$ backflips via
offline trajectory optimization~\cite{katz2019minicheetah}, and finally
online centroidal-momentum trajectory optimization of in-air reorientation
maneuvers such as flips and barrel rolls~\cite{chignoli2021aerial}.
The wheeled-vehicle setting differs from these platforms in two mechanistic
ways. Where prior systems carry a dedicated momentum-exchange actuator (a
reaction wheel, gyro, or tail) or reconfigure large-range limbs (legged
platforms), DART repurposes the existing four-motor drive and
front-wheel steering as the exchange actuator, obtaining roll as a second controllable axis at no added mass from steer-induced drive
reaction, a capability a single fixed-axis reaction wheel cannot provide.
More consequential for the budget analysis is the recoverable attitude
itself. A spacecraft reaction wheel also has a finite speed envelope and
hence a bounded body-rate change, but the classical slew task is angle
reorientation without a hard time limit: a bounded rate integrated over an
unconstrained maneuver time yields effectively unbounded attitude
authority, and the wheels can be desaturated between slews. The
falling cat, inertial tails, and legged-limb maneuvers instead redistribute
rather than store momentum, so their recoverable attitude is set by
zero-net-momentum body reconfiguration. In a wheeled vehicle, however, the wheels saturate at a finite spin speed within the roughly one-second flight, so a hard \emph{momentum budget} caps the recoverable body rate. This deadline-bound, non-desaturable budget regime has no operative analogue in the spacecraft or tail literature and must be quantified rather than assumed sufficient.

\subsection{Wheel-Reaction Airborne Attitude Control}
The line closest in mechanism to DART's wheel-reaction law reorients an airborne ground vehicle using the
reaction torques of independently driven and steered wheels. On the analytic
side, Gonzalez \emph{et al.} (AGRO)~\cite{gonzalez2020agro} pioneered the use of
in-wheel-motor reaction torques for in-air pitch and roll control, and Kim
\emph{et al.}~\cite{kim2021comparing} provided the full aerial dynamics with
feedback-linearizing and adaptive-backstepping laws, Lyapunov stability
guarantees, and the $\sin\delta$/$\cos\delta$ torque Jacobian, establishing
feasibility in simulation on a small idealized platform. On the learning side, Pokhrel
\emph{et al.}~\cite{pokhrel2025dom} repurpose throttle and steering to control
airborne attitude. Their bicycle-model formulation yields the same
steer-resolved roll term, written $\sin\delta$ in our notation, and likewise
notes the absence of yaw authority. They denote steering angle by $\psi$,
whereas we reserve $\psi$ for yaw. Building on this model, they learn a hybrid physics-plus-data
kinodynamic model that reaches a prescribed landing pose by sampling
airborne controls over a fixed horizon, demonstrated on a gimbal rig and a
small RC car; this is the closest prior work that actively controls the air phase
toward a landing objective. A related
patent~\cite{us11981873} discloses distributed drive/brake torque for in-air pitch
and roll in a reactive scheme.

DART differs from prior work in three respects. It provides (i)~an explicit
angular-momentum budget and closed-form certified
feasible set (Section~\ref{sec:theory}), rather than an assumed reachable horizon;
(ii)~back-propagation of the landing constraint into a
pre-takeoff reachability gate, whereas prior approaches evaluate the air phase
largely in isolation; and (iii)~landing-outcome and geometry-generalization
tests at full vehicle mass (1383\,kg). Table~\ref{tab:positioning} summarizes
the comparison.

\section{The DART Cross-Phase Framework}
\label{sec:method}

DART is organized as a chain of three phases (ground run-up P0, airborne P1,
and landing P2) coupled solely through the landing
terminal constraint $\CL$ and the certified feasible takeoff set $\Fset$.
The design proceeds backward, with Section~\ref{sec:backprop} deriving $\Fset$
from $\CL$ via the airborne dynamics, Section~\ref{sec:shaping} using $\Fset$
to shape P0, and Sections~\ref{sec:dual}--\ref{sec:alloc} executing the
in-flight control.

\subsection{Problem Statement}
A jump comprises three phases, a ground run-up (P0), an airborne phase (P1),
and landing (P2), with phase switches triggered by $n_{\mathrm{ph}}$
consecutive
zero-contact samples (P0$\rightarrow$P1) and by ground re-contact
(P1$\rightarrow$P2). Among these three phases, landing is the decisive safety
event because the touchdown attitude must match the
landing surface, and the angular and vertical rates must fall inside the
suspension absorption window. We pose jump safety as a terminal
constraint on the P2 touchdown state,
\begin{multline}
\CL = \{ (\theta,\phi,\omega_\theta,\omega_\phi,v_z) :
|\theta-\theta_{\mathrm{surf}}|\le\bar\theta,\,|\phi|\le\bar\phi,\\
|\omega_\theta|\le\bar\omega_\theta,\,|\omega_\phi|\le\bar\omega_\phi,\,
|v_z|\le\bar v_z \}
\end{multline}
where $\omega_\theta$ and $\omega_\phi$ are body pitch and roll rates,
$v_z$ is body-frame vertical velocity, $\theta_{\mathrm{surf}}$ is the
landing-surface tangent at the touchdown point,
$(\bar\theta,\bar\phi,\bar\omega_\theta,\bar\omega_\phi,\bar v_z)$ are the
suspension-absorption tolerances. Attitudes are expressed in a body frame
($x$ forward, $y$ left, $z$ up), with roll $\phi$ (about $x$) and yaw $\psi$
(about $z$) following the right-hand convention, and pitch $\theta$
nose-up positive, the right-handed rotation about $-y$.

The control problem is to drive the P2 touchdown state into $\CL$. Because yaw
is unactuated in P1, $\CL$ excludes $\psi$ and retains it as an uncontrolled
closed-loop outcome.
P1 offers no translational control and only bounded attitude authority. We
derive that bound next, so membership in the exact backward-reachable set
$\Fset^\star$ is necessary. The gate enforces the stronger, sufficient condition
$x_0\in\Fset\subseteq\Fset^\star$ constructed in
Section~\ref{sec:backprop}.

\subsection{Airborne Dynamics and Momentum Budget}
\label{sec:budget}
In flight the only external force is gravity, which acts at the center of
mass and exerts no torque about it. Total angular momentum of the
chassis-plus-wheels system is conserved; any change in body angular momentum
is drawn from the wheels:
\begin{multline}
H_{\mathrm{body}}(t)+\sum_i I_i\,\omega_i(t)\,\hat e_i=\text{const} \\
\Longrightarrow\;
I\,\dot\omega_{\mathrm{body}}=-\sum_i I_i\,\dot\omega_i\,\hat e_i
\label{eq:conservation}
\end{multline}
where $H_{\mathrm{body}}$ and $\omega_{\mathrm{body}}$ denote body angular
momentum and rate, $I$ is the body inertia tensor, and $I_i$, $\omega_i$, and
$\hat e_i$ are the inertia, spin rate, and spin-axis unit vector of wheel~$i$.
The planar budget analysis below scalarizes the body rate to its pitch
component $\omega_\theta$.
The differentiated form holds exactly for the planar pitch motion analyzed
below ($\delta=0$, all spin axes parallel to the body $y$-axis); general
three-dimensional motion adds the transport term
$\omega_{\mathrm{body}}\times H_{\mathrm{body}}$ and steering-induced
$\dot{\hat e}_i$ terms, which vanish in that planar case.
Accelerating a wheel under drive pitches the nose up; regenerative braking pitches
it down. This gives a bidirectional pitch authority that single-sided
throttle/brake reactive control lacks.

Steering resolves this reaction onto two axes. A wheel steered by
$\delta$ has spin axis $\hat e_i=(-\sin\delta,\cos\delta,0)$ in the body frame;
its reaction torque splits into a pitch component $\propto\cos\delta$
and a roll component $\propto\sin\delta$. Rear wheels contribute only to
pitch, whereas the steered front wheels provide the $\sin\delta$ roll
component.
This resolution is the same Jacobian as in prior work~\cite{gonzalez2020agro,pokhrel2025dom,kim2021comparing};
we restate it to keep the budget analysis self-contained. The two channels are
co-allocated, so large roll demands borrow front-axle authority from pitch.

This bidirectional, two-axis authority is bounded by two distinct limits. We illustrate their scale using the nominal platform parameters of
Section~\ref{sec:setup}. These are $m\approx1383$\,kg and
$I_{yy}\approx2043$\,kg\,m$^2$, with
four wheels of $I_w=1.2$\,kg\,m$^2$ and $r=0.36$\,m, an effective per-wheel
drive torque of ${\approx}1200$\,N\,m and a sustained wheel-speed envelope
$\omega_{\max}\approx1200$\,rpm, both trace-derived at the in-flight
operating point. The drivetrain's design limits are higher, but the motor
torque curve decays over the wheel speeds visited during airborne
corrections, so these effective values are what the budget consumes; a
hard-limit calibration in Section~\ref{sec:res-reachf1} measures the
headroom that remains above them.
Instantaneous torque sets how fast the body rate can change. In flight, the
chassis reaction to each motor torque is the torque itself, so
$\dot\omega_{\mathrm{body}}\le\sum_i\tau_i/I_{yy}\approx2.35$\,rad/s$^2$,
which corresponds to a rate authority of ${\approx}135^\circ$/s over a
1\,s flight;
wheel-speed saturation instead limits each wheel to a finite
$\Delta\omega_i$ and caps the total recoverable body-rate change at
\begin{equation}
\Delta\omega_{\mathrm{body}}\le
\frac{\sum_i I_i\,\Delta\omega_i^{\max}}{I_{yy}}
\label{eq:budget}
\end{equation}
where $\Delta\omega_{\mathrm{body}}$ is the recoverable body pitch-rate change,
$\Delta\omega_i^{\max}=\omega_{\max}-\omega_{i,0}$ is the remaining
spin-up margin of wheel~$i$, and $I_{yy}$ is the body pitch inertia.
Equation~\eqref{eq:budget} illustrates the nose-up drive direction;
Theorem~\ref{thm:budget} gives the corresponding nose-up and nose-down
budgets $B_\uparrow$ and $B_\downarrow$. With the
in-flight spin-up margin $\Delta\omega_i^{\max}\approx70$--$95$\,rad/s at
typical takeoff wheel speeds, this caps the recoverable
nose-up body pitch-rate change at about 0.16--0.23\,rad/s
($\approx9$--$13^\circ$/s), roughly one-tenth of the torque-rate figure; the
braking budget $B_\downarrow$, extending through zero into reverse rotation,
is about twice as large at the same wheel speeds.

\subsection{Landing-Constraint Back-Propagation}
\label{sec:backprop}
Given $\CL$ at the predicted landing surface and the airborne dynamics of
Section~\ref{sec:budget} under the torque and budget limits, we define the exact
backward-reachable set
\begin{multline}
\Fset^\star=\{x_0=(\theta_0,\omega_{\theta0},\phi_0,\omega_{\phi0},v_0):\\
\exists\,u(\cdot)\ \text{s.t.}\ \mathrm{P1}(x_0,u)\ \text{ends in}\ \CL\}
\end{multline}
where $x_0=(\theta_0,\omega_{\theta0},\phi_0,\omega_{\phi0},v_0)$ is the
takeoff state and $u(\cdot)$ is an admissible in-flight control input
trajectory. In words, $\Fset^\star$ contains every takeoff state from which some
admissible in-flight control sequence drives the vehicle into the terminal
window. DART uses a constructive certificate
$\Fset\subseteq\Fset^\star$. Section~\ref{sec:theory} certifies its pitch
slice $\Fset_\theta\subseteq\Pi_\theta(\Fset^\star)$ through the
null-then-correct policy of Theorem~\ref{thm:feasible}, its roll slice through
Corollary~\ref{cor:rollaxis}, and its ballistic speed condition through
Proposition~\ref{prop:env}. Membership in $\Fset$ is sufficient for
feasibility but need not exhaust all admissible control sequences. Because translation is
ballistic, flight time $T$ and landing-surface tangent $\theta_{\mathrm{surf}}$
are functions of $(v_0,\theta_0,\text{geometry})$; the pitch sub-problem then
reduces to checking whether the bounded-budget rate authority can null the
residual $|\theta(T)-\theta_{\mathrm{surf}}|$ and $\omega_\theta(T)$.
Using the directional budgets of Theorem~\ref{thm:budget} yields a necessary
terminal-rate condition; the null-then-correct construction adds a sufficient
attitude-displacement condition for the certified set $\Fset$. The two
remaining coordinates of $\Fset$ are certified analogously, with the roll
pair $(\phi_0,\omega_{\phi0})$ using the corresponding closed-form certificate under the
steer-resolved budget given by Corollary~\ref{cor:rollaxis} in
Section~\ref{sec:theory}. The takeoff speed $v_0$ determines the
touchdown vertical rate through the ballistic arc, so the $\bar v_z$
condition of $\CL$ becomes the speed window of Proposition~\ref{prop:env}.

$\Fset$ is verifiable because it is a forward-evaluable analytic condition
rather than a learned estimate. This verifiability is precisely what allows
$\Fset$ to serve as a hard gate, in contrast to a soft planning cost.
Throughout, we keep the budget criterion separate from the speed-shaping
mechanism. \emph{Speed shaping} matches the takeoff speed to the landing
geometry, whereas the \emph{budget} is the angular-momentum authority limit.
The two are coupled because both are driven by the takeoff
speed through the wheel spin state, as Corollary~\ref{cor:asym} shows.

\subsection{Receding-Horizon Takeoff-Speed Shaping}
\label{sec:shaping}
We propagate $\Fset$ backward through the P0 run-up. The ground controller
solves a receding-horizon problem that keeps the predicted takeoff state inside
$\Fset$, primarily through speed shaping. With remaining run-up distance $d$
to the lip and a critical takeoff speed $v_{\mathrm{crit}}$, defined as the
boundary of $\Fset$ projected onto speed for the current geometry, the
admissible speed envelope is
\begin{equation}
v_{\max}(d)=\sqrt{v_{\mathrm{crit}}^2+2\textstyle\int_0^{d} a_{\mathrm{brake}}(s)\,ds}
\label{eq:vmaxint}
\end{equation}
where $a_{\mathrm{brake}}(s)>0$ is the total decelerating specific force at
run-up position $s$, namely the resultant of service/regenerative braking,
rolling and aerodynamic resistance, and the along-track gravity component,
all of which vary along the arc. The implementation uses the constant-force
reduction $v_{\max}(d)=\sqrt{v_{\mathrm{crit}}^2+2a_{\mathrm{brake}}d}$ with
online adaptation absorbing the neglected variation.
Exceeding this bound triggers physical braking, while staying below it allows
throttle tracking up to $v_{\max}(d)$. The speed-shaping law acts as a
closed-loop speed regulator throughout the run-up; the reachability gate
retains the binary go/no-go decision.

\subsection{The Go/No-Go Takeoff Decision}
\label{sec:go-nogo}
Before takeoff, the gate issues a go or no-go decision. If the predicted takeoff state
lies outside $\Fset$ and cannot be corrected within the remaining run-up, the
system executes a safe action rather than committing to a jump without a
feasibility certificate. It either aborts the jump or decelerates and
traverses the terrain feature slowly. This replaces learned extrapolation
beyond the training distribution with a verifiable analytic boundary. We demonstrate both
branches in closed loop.

\subsection{Dual-Axis Airborne Attitude Control Law}
\label{sec:dual}
Our airborne law regulates pitch and roll through \emph{torque-reaction}
actuation. We use the driven wheels as a reaction-wheel-like momentum
actuator. A common all-wheel drive torque changes the total wheel angular
momentum and produces a pitch reaction; the front-wheel steer angle
$\delta$ rotates that reaction vector and opens the roll axis
($\propto\sin\delta$). Unlike conventional torque vectoring, which
distributes torque unevenly across wheels for ground yaw control, DART
applies a common drive torque in flight; the attitude authority arises from
the body's reaction to the total wheel torque.

Because wheels are angular-momentum-bounded actuators, we use
saturated rate tracking rather than direct angle-PD. The rate-reference
bound caps the momentum the tracking loop may request, a guarantee that a
direct angle-PD gain does not provide. We
generate a saturated pitch-rate reference from the angle
error,
\begin{equation}
\omega_\theta^{\mathrm{des}}=\mathrm{clip}\big(K_\theta(\theta_{\mathrm{target}}-\theta),\,
\pm\omega_{\theta,\max}\big)
\end{equation}
where $\theta_{\mathrm{target}}$ is the target pitch at touchdown, $K_\theta$ the
proportional gain, and $\omega_{\theta,\max}$ the pitch-rate reference bound;
the reference converges to zero as
$\theta\to\theta_{\mathrm{target}}$. We apply a common motor-torque term to all four wheels
from the rate error $e_\omega=\omega_\theta^{\mathrm{des}}-\omega_\theta$ via
$\Delta\tau_{\mathrm{pitch}}=K_d e_\omega$, where $K_d$ is the pitch-rate
derivative gain; the body pitch reaction is the resulting rate of change of
total wheel angular momentum, as Eq.~\eqref{eq:conservation} shows, not a front-versus-rear
torque split. The implementation realizes $K_d$ as a normalized drive gain
that maps the rate error to a unit-saturated drive command, and the motor
envelope supplies the torque magnitude.
The resulting per-wheel commands saturate at the motor-torque
envelope $|\tau_i|\le\tau_{\max}$ through the final clip in
Algorithm~\ref{alg:dart};
this is the same per-wheel bound the reduced model carries through
$a_\theta=n_w\tau_{\max}/I_{yy}$.

For roll, we use steer-induced drive reaction and set the front-wheel steering
angle via
\begin{equation}
\label{eq:rolllaw}
\delta=w_{\mathrm{roll}}\cdot
\mathrm{clip}(K_\phi(\phi_{\mathrm{target}}-\phi),\,
\pm\delta_{\mathrm{roll}}^{\max})
\end{equation}
where $\phi_{\mathrm{target}}$ is the target roll at touchdown, $K_\phi$ the roll gain,
$\delta_{\mathrm{roll}}^{\max}$ the roll steer saturation, and
$w_{\mathrm{roll}}\in\{0,1\}$ is the per-flight roll latch defined in
Eq.~\eqref{eq:wroll}; as in Algorithm~\ref{alg:dart}, the roll channel is
additionally gated to $\delta=0$ whenever the roll error is inside the
roll deadband.
We use a common drive torque to keep the wheels spinning, so the reaction
carries a roll component $\propto\sin\delta$. We saturate roll steer
independently so that the small steering clamp applied during ground contact
does not curtail the in-flight roll authority.

The P0$\to$P1 switch requires a short no-contact count together with
lip-proximity and apex-height guards that reject transient wheel unloading
on the ramp face. Torque engagement requires a longer count, which keeps
reaction torque from disturbing the takeoff state, and a wheel-speed margin
consistent with the momentum budget of Theorem~\ref{thm:budget}.
Algorithm~\ref{alg:dart} folds these conditions into two guard predicates:
\proc{TakeoffConfirmed} debounces the no-contact count and adds the
lip-clearance guards for the phase switch, while \proc{EngageOK} debounces
torque engagement and adds the wheel-speed margin. Section~\ref{sec:setup}
lists the thresholds.

When roll error stays below the roll deadband, our controller leaves the roll
channel inactive ($\delta\to 0$), allocates the front-axle reaction budget
entirely to pitch, and reduces to \emph{pitch-only rate
tracking}; this is distinct from the reactive angle-PD baseline, which uses a
different pitch map and a separate counter-steer roll loop. Between the
deadband and full engagement, an
always-on-roll policy would keep the roll channel engaged whenever
$|\phi_{\mathrm{err}}|$ exceeds the deadband and would route every attitude
correction through the body-horizontal actuator plane.

The wheel-spin and steering inputs generate reaction torque only in the
body-horizontal plane, so they provide no direct yaw authority about the
body-$z$ axis. When the vehicle is tilted, however, this plane is no longer
horizontal in the world frame, and part of the reaction torque projects onto
the world-vertical axis:
\begin{equation}
\label{eq:yawleak}
M_z^{\mathrm{world}}
=\big(R(\phi,\theta)\,I_w\dot\omega_w\,\hat a(\delta)\big)_z
\;\propto\;\sin(\mathrm{tilt})\cdot\tau_{\mathrm{applied}}
\end{equation}
where $\dot\omega_w$ is the common wheel spin acceleration generated by the
drive reaction,
$\hat a(\delta)=[-\sin\delta,\cos\delta,0]^\top$ the steered spin axis,
$I_w$ a representative wheel inertia, $\tau_{\mathrm{applied}}$ the applied
motor torque, $R(\phi,\theta)$ the body-to-world rotation, and
$M_z^{\mathrm{world}}$ the world-frame yaw torque of the wheel assembly;
by the conservation of Eq.~\eqref{eq:conservation}, the body receives the
negative of this wheel-side projection. The projected fraction
scales with $\sin(\mathrm{tilt})$; for example,
$\sin 28^\circ\approx0.47$, close to one half.

This projection produces yaw torque without necessarily causing net yaw
drift. If the correction torque alternates symmetrically, the resulting yaw
impulses largely cancel. Persistent drift arises when the correction contains
a sustained same-sign component, either because the roll error remains biased
or because actuator latency disrupts this cancellation. The finite wheel-speed
margin limits the duration of such same-sign momentum exchange because a wheel
cannot continue supplying torque in that direction after reaching its
directional speed bound, thereby bounding the associated yaw impulse.

Net yaw drift therefore depends on how long roll correction keeps
$\delta\neq0$ while body tilt remains. Roll steering sustains the projection
in Eq.~\eqref{eq:yawleak}, but it is also the only in-flight means of reducing
the tilt that amplifies that projection. Any schedule that weakens or
intermittently disables roll while tilt persists prolongs the leakage
interval. We therefore use the per-flight latch
\begin{equation}
\label{eq:wroll}
w_{\mathrm{roll}}=
\begin{cases}
1 & \text{rest of flight, once } |\phi_{\mathrm{err}}|\ge\phi_{\mathrm{on}},\\[2pt]
0 & \text{otherwise, if never engaged this flight}
\end{cases}
\end{equation}
where $\phi_{\mathrm{err}}=\phi_{\mathrm{target}}-\phi$ and $\phi_{\mathrm{on}}$
is the roll-engagement threshold, set to $8^\circ$ in
Section~\ref{sec:setup}. The latch is re-initialized once per jump during
the ground phase rather than cleared on contact, because transient contact
flicker at launch would otherwise re-arm the channel mid-flight. It makes a
single within-flight commitment in which the
roll channel remains inactive if the error never reaches the threshold but
stays at full authority until landing once the threshold is crossed.
This avoids unnecessary steering on benign entries while correcting tilt as
rapidly as possible on stressed ones.

We reject three softer scheduling alternatives because each prolongs the
interval over which body tilt and steering coexist. A yaw-budget schedule
with $w=\max(0,1-|\psi_{\mathrm{air}}|/\psi_{\mathrm{budget}})$ reduces roll
authority only after yaw has already accumulated, leaving the tilt that
generates the projection in Eq.~\eqref{eq:yawleak}. A benefit-proportional
schedule with $w=\min(1,|\phi_{\mathrm{err}}|/\phi_{\mathrm{auth}})$ weakens roll
correction while the error is still present, extending the leakage
interval. Deadband-exit hysteresis repeatedly disengages and re-engages the
channel near the threshold; under pitch-heavy workloads, the resulting
correction bursts can accumulate yaw with the same effective sign. Our
per-flight latch avoids these failure modes through monotonic within-flight
allocation. On entries that exceed $\phi_{\mathrm{on}}$ immediately, it is
identical to always-on roll control by construction. The resulting yaw
mechanism and the latch-versus-always-on comparison are evaluated in closed
loop.

The implemented steer channel also carries a bounded heading trim acting
through the same tilted-plane projection of Eq.~\eqref{eq:yawleak}. Once
the in-air yaw error leaves a small deadband, a capped counter-steer
component is blended into $\delta$, and roll steer is attenuated at large
yaw error so that heading recovery is not paid for with additional tilt.
The trim adds no yaw authority beyond this projection and leaves the
pitch-axis budget untouched; touchdown yaw is still analyzed as a
disturbance in Section~\ref{sec:results}. Section~\ref{sec:setup} lists its
parameters.

\subsection{Three-Phase Torque Allocation}
\label{sec:alloc}
Per-wheel drive torques carry one common term that serves both the
pitch-drive demand $\Delta\tau_{\mathrm{pitch}}$ and, when roll is engaged,
the spin-hold demand $\tau_{\mathrm{spin}}$. The two demands act on the same
wheels, so the allocator applies the same-sign larger of the two rather
than their sum, and front steer $\delta$ resolves part of the front-wheel
reaction into roll. Thus pitch and roll share the front-wheel reaction
budget, the sharing that Theorem~\ref{thm:frontier} prices. Unlike
driveline-disengagement protection~\cite{us12233882},
DART keeps the driveline engaged as the airborne attitude actuator and, in
the final descent below the land-match height $z_{\mathrm{lm}}$, re-orders
wheel speed toward ground speed as attitude-neutral preparation for
touchdown.
Algorithm~\ref{alg:dart} summarizes the per-step cross-phase DART
controller. Each jump starts with $\mathrm{phase}=\mathrm{P0}$ and
$w_{\mathrm{roll}}=0$; contact does not clear the latch.

The window $[v_{\min},v_{\mathrm{crit}}]$ consumed by P0 is the deployed
one\hyb dimensional reduction of the certified projection $\Pi_v(\Fset)$. Its
upper bound $v_{\mathrm{crit}}$ is calibrated offline for each jump
geometry, its lower bound $v_{\min}$ is the clearance requirement of
Proposition~\ref{prop:env}, and an empty window triggers the no-go branch
demonstrated in Section~\ref{sec:res-abort}. The predicates
\proc{TakeoffConfirmed} and \proc{EngageOK} add lip-clearance and
wheel-speed-margin guards to the two debounce counts of
Section~\ref{sec:dual}; Section~\ref{sec:setup} specifies their settings.

\begin{algorithm}[!t]
\footnotesize
\DontPrintSemicolon
\caption{DART cross-phase control step.}
\label{alg:dart}
\Input{state $x=(\theta,\phi,\psi,\omega_\theta,\omega_\phi,v,\dots)$;
wheel speed $\omega_w$, radius $r$; height $z$ and land-match height
$z_{\mathrm{lm}}$; contact flag $c$; phase $\in\{\mathrm{P0,P1,P2}\}$;
run-up distance $d$; calibrated certified window
$[v_{\min},v_{\mathrm{crit}}]$ (Section~\ref{sec:setup}); speed request
$v_{\mathrm{req}}$; targets
$\theta_{\mathrm{target}},\phi_{\mathrm{target}}$; gains
$K_\theta,K_\phi,K_d$; bounds
$\omega_{\theta,\max},\delta_{\mathrm{roll}}^{\max}$ and motor-torque limits;
deadband $\phi_{\mathrm{db}}$; latch threshold $\phi_{\mathrm{on}}$ and state
$w_{\mathrm{roll}}$; braking bound $a_{\mathrm{brake}}$; roll spin demand
$\tau_{\mathrm{spin}}$}
\Output{updated phase and $w_{\mathrm{roll}}$; per-wheel torques
$\{\tau_i\}$ and front steer $\delta$}
\lIf{$c=1$ \textbf{and} $\mathrm{phase}=\mathrm{P1}$}{$\mathrm{phase}\gets\mathrm{P2}$}
\lIf{$c=0$ \textbf{and} $\mathrm{phase}=\mathrm{P0}$ \textbf{and}
\proc{TakeoffConfirmed}}{$\mathrm{phase}\gets\mathrm{P1}$}
$\Delta\tau_{\mathrm{pitch}}\gets0$,\ $\tau_{\mathrm{sp}}\gets0$,
$\{\tau_i\}\gets0$,\ $\delta\gets0$\;
\uIf{$\mathrm{phase}=\mathrm{P0}$}{
  \uIf{$[v_{\min},v_{\mathrm{crit}}]=\varnothing$}{
    $\{\tau_i\}\gets\proc{SafeNoGo}(x)$\tcp*[r]{abort or decelerate-through}
  }
  \Else{
  $v_{\max}(d)\gets\sqrt{v_{\mathrm{crit}}^2+2\int_0^{d}a_{\mathrm{brake}}(s)\,ds}$\tcp*[r]{Eq.~\eqref{eq:vmaxint}}
  $\{\tau_i\}\gets\proc{GroundSpeedControl}(v,\min\{v_{\max}(d),v_{\mathrm{req}}\})$\;
  }
}
\uElseIf{$\mathrm{phase}=\mathrm{P1}$ \textbf{and} $z>z_{\mathrm{lm}}$}{
  \tcp{P1a: airborne attitude correction}
  $\phi_{\mathrm{err}}\gets\phi_{\mathrm{target}}-\phi$\;
  \lIf{$|\phi_{\mathrm{err}}|\ge\phi_{\mathrm{on}}$}{$w_{\mathrm{roll}}\gets1$}
  \lIf{$|\phi_{\mathrm{err}}|>\phi_{\mathrm{db}}$}{$\delta\gets
  w_{\mathrm{roll}}\cdot\mathrm{clip}(K_\phi\phi_{\mathrm{err}},
  \pm\delta_{\mathrm{roll}}^{\max})$}
  \If{\proc{EngageOK}}{
    $\omega_\theta^{\mathrm{des}}\gets\mathrm{clip}\!\big(K_\theta(\theta_{\mathrm{target}}-\theta),\,\pm\omega_{\theta,\max}\big)$\;
    $\Delta\tau_{\mathrm{pitch}}\gets
    K_d(\omega_\theta^{\mathrm{des}}-\omega_\theta)$\;
    \lIf{$\delta\neq0$}{$\tau_{\mathrm{sp}}\gets\tau_{\mathrm{spin}}$}
    $\{\tau_i\}\gets\mathrm{sgn}(\Delta\tau_{\mathrm{pitch}})\,
    \max\{|\Delta\tau_{\mathrm{pitch}}|,\tau_{\mathrm{sp}}\}$ on all four
    wheels\tcp*[r]{shared reaction (Thm.~\ref{thm:frontier})}
  }
}
\uElseIf{$\mathrm{phase}=\mathrm{P1}$}{
  $\{\tau_i\}\gets\proc{WheelSpeedMatch}(\omega_w\!\to\!v/r)$\tcp*[r]{P1b ($z\le z_{\mathrm{lm}}$)}
}
\ElseIf{$\mathrm{phase}=\mathrm{P2}$}{
  \tcp{pitch differential off; roll steer holds}
  \lIf{$|\phi_{\mathrm{target}}-\phi|>\phi_{\mathrm{db}}$}{$\delta\gets
  w_{\mathrm{roll}}\cdot\mathrm{clip}(K_\phi(\phi_{\mathrm{target}}-\phi),\pm\delta_{\mathrm{roll}}^{\max})$}
}
clip $\{\tau_i\}$ and $\delta$ to the motor/steer limits\;
\Return $\mathrm{phase},w_{\mathrm{roll}},\{\tau_i\},\delta$\;
\end{algorithm}

\subsection{Reachability Theory and Envelopes}
\label{sec:theory}

The preceding subsections described the mechanism; here we formalize its
theoretical foundations. These results distinguish DART from prior wheel-reaction
attitude controllers, which give the $\sin\delta$/$\cos\delta$ Jacobian and a
stabilizing law but no reachability or authority theory.

All results in this section are stated for a nominal
reduced model and validated empirically on the full soft-body multibody
simulator; actuator--landing-surface coupling remains outside the reduced
model. We assume:
\begin{itemize}
    \item \textbf{A1:} Rigid-body pitch modeled as a double integrator driven by wheel-reaction torque; for Theorem~\ref{thm:frontier} and Corollary~\ref{cor:camber}, roll is modeled the same way.
    \item \textbf{A2:} Conservation of chassis-plus-wheel angular momentum in flight, since gravity exerts no moment about the center of mass.
    \item \textbf{A3:} Point-mass ballistic translation.
    \item \textbf{A4:} Each wheel speed remains within
    $[\omega_{\min},\omega_{\max}]$, giving finite drive and braking momentum increments.
    Here $\omega_{\min}<0$. The envelope includes reverse rotation, in which
    the wheels are driven through zero into reverse, and on the evaluated
    platform the sustained bounds are approximately symmetric,
    $\omega_{\min}\approx-\omega_{\max}$. At the in-flight operating point,
    Section~\ref{sec:setup} gives $[-1200,1200]$\,rpm.
    \item \textbf{A5:} For the speed--authority corollary only, the wheels leave the lip without longitudinal slip, so $\omega_{i,0}=v_0/r$.
\end{itemize}
Let $I_{yy}$ be the body pitch inertia, $I_i$ and $\omega_i$ the wheel inertias
and speeds, $\theta_{\mathrm{surf}}$ the terminal landing pitch defined by the
landing-surface tangent in $\CL$, $\omega_{\mathrm{target}}$ the
terminal target pitch-rate, and $T$ the flight time. Let $n_w$, $n_f$, and $n_r$ be
the numbers of driven, front driven, and rear driven wheels. Here $n_w=4$
and $n_f=n_r=2$. Let
$a_\theta=n_w\tau_{\max}/I_{yy}$ the symmetric pitch angular-acceleration bound used by the
reduced model, where $\tau_{\max}$ denotes the effective per-wheel
drive/braking torque bound at the in-flight operating point. Its value is
approximately $1200$\,N\,m, as stated in Section~\ref{sec:setup}. For the two-axis certificate, $H_f$ and $H_r$ denote the
smaller of the drive and braking angular-momentum capacities on the front and
rear axles, respectively.

\begin{theorem}[Authority ceiling and impossibility bound]
\label{thm:budget}
Under A1--A4, the maximum recoverable change of body pitch-rate over any
flight is direction-dependent:
\begin{align}
B_\uparrow&=\frac{\sum_i I_i(\omega_{\max}-\omega_{i,0})}{I_{yy}}
\quad\text{(nose-up, drive)},\\
B_\downarrow&=\frac{\sum_i I_i(\omega_{i,0}-\omega_{\min})}{I_{yy}}
\quad\text{(nose-down, braking)}
\end{align}
where $\omega_{i,0}$ is the takeoff spin speed of wheel~$i$ and
$\omega_{\min}$ the lower wheel spin bound.
Independent of motor torque, any takeoff whose required correction exceeds
the budget in its direction by more than the terminal-window half-width
$\bar\omega_\theta$,
$\omega_{\mathrm{target}}-\omega_{\theta0}>B_\uparrow+\bar\omega_\theta$ or
$\omega_{\theta0}-\omega_{\mathrm{target}}>B_\downarrow+\bar\omega_\theta$,
cannot be brought into the terminal-rate window
$|\omega_\theta(T)-\omega_{\mathrm{target}}|\le\bar\omega_\theta$ in flight.
Let $B\in\{B_\uparrow,B_\downarrow\}$ denote the budget in the required
correction direction; then $B\propto 1/I_{yy}$ and, for fixed wheel hardware
with fixed inertias and speed bounds, and body inertia growing with mass at fixed
footprint ($I_{yy}\propto m$), $B$ decreases with vehicle mass.
\end{theorem}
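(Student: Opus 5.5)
The plan is to prove all three claims from the integrated form of the conservation law \eqref{eq:conservation} under A1--A2, using A4 only to bound the wheel-side momentum change. No feedback law or motor-torque bound enters, which is what makes the result torque-independent.

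\textbf{Step 1: integrate the pitch balance.} In the planar reduced model of A1 all spin axes are parallel to body $y$, so the transport and $\dot{\hat e}_i$ terms vanish. Integrating the scalar pitch component of \eqref{eq:conservation} from takeoff to any $t\le T$ gives
\begin{equation*}
I_{yy}\bigl(\omega_\theta(t)-\omega_{\theta0}\bigr)=\sum_i I_i\bigl(\omega_i(t)-\omega_{i,0}\bigr).
\end{equation*}
This identity holds for every admissible input $u(\cdot)$, whatever torque profile generated it.

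\textbf{Step 2: fix the sign convention.} Drive, meaning wheel spin-up, produces nose-up rotation, as stated in Section~\ref{sec:budget}. I will therefore orient the wheel spin coordinate so that positive $\omega_i$ increments map to positive $\omega_\theta$. This is the step most likely to need care, because the paper's pitch is positive about $-y$ while the spin axis is $+y$. I would settle it by appeal to the paper's stated physical convention rather than by recomputing signs.

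\textbf{Step 3: apply A4 to get the budgets.} With $\omega_{\min}\le\omega_i(t)\le\omega_{\max}$, each term satisfies
\begin{equation*}
-(\omega_{i,0}-\omega_{\min})\le\omega_i(t)-\omega_{i,0}\le\omega_{\max}-\omega_{i,0}.
\end{equation*}
Dividing the identity of Step 1 by $I_{yy}$ then yields
\begin{equation*}
-B_\downarrow\le\omega_\theta(t)-\omega_{\theta0}\le B_\uparrow\quad\text{for all } t.
\end{equation*}
These bounds are the maximum, not merely upper bounds. Any torque profile that drives every wheel monotonically to $\omega_{\max}$, or to $\omega_{\min}$, within the flight attains them. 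This also covers the reverse-inclusive braking case, since $\omega_{\min}<0$.

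\textbf{Step 4: impossibility.} Suppose $\omega_{\mathrm{target}}-\omega_{\theta0}>B_\uparrow+\bar\omega_\theta$. By Step 3,
\begin{equation*}
\omega_\theta(T)\le\omega_{\theta0}+B_\uparrow<\omega_{\mathrm{target}}-\bar\omega_\theta,
\end{equation*}
so the terminal window $|\omega_\theta(T)-\omega_{\mathrm{target}}|\le\bar\omega_\theta$ is violated. The braking case is symmetric, with $\omega_\theta(T)\ge\omega_{\theta0}-B_\downarrow>\omega_{\mathrm{target}}+\bar\omega_\theta$. Since Step 1 holds for every $u(\cdot)$ and every $T$, the conclusion is uniform in motor torque and in flight time.

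\textbf{Step 5: scaling.} Both $B_\uparrow$ and $B_\downarrow$ are a numerator fixed by the wheel hardware (the inertias $I_i$ and the speed bounds), divided by $I_{yy}$. Hence $B\propto 1/I_{yy}$. At a fixed footprint the body geometry is fixed, so $I_{yy}=\int r^2\,dm$ scales linearly with $m$, and $B$ decreases with mass.

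One caveat on the numerator: under A5 the takeoff spin $\omega_{i,0}=v_0/r$ may also depend on mass through $v_0$. The theorem's hypothesis of fixed hardware with fixed speed bounds, together with the budget evaluated at the given $\omega_{i,0}$, keeps that numerator mass-independent. I will state this assumption explicitly in the proof.
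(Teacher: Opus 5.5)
Your skeleton matches the paper's proof: project A2 onto pitch, bound each wheel's increment with A4, compare with the terminal window, and scale by $1/I_{yy}$. The impossibility part, however, has a scope gap. Step~1 assumes every spin axis stays along body $y$, and Step~4 then asserts that the identity ``holds for every $u(\cdot)$''. In this paper the admissible in-flight input includes front steer: DART's own roll channel runs with $\delta\neq0$, and Theorem~\ref{thm:frontier} is built on it. A1 posits only a pitch double integrator, not $\delta\equiv0$. So the theorem's ``over any flight'' and ``no admissible policy'' must cover steered inputs, and for those your Step~1 identity does not hold. As written, you have proved the bound only for pitch-only controllers. The paper's proof covers this case explicitly: for steered flights the front spin axes enter the pitch projection with factor $\cos\delta\le1$, so every steering profile is dominated by the $\delta\equiv0$ extremum.

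The repair is short. Write Step~1 with the pitch projection of the wheel momentum, $I_{yy}\bigl(\omega_\theta(t)-\omega_{\theta0}\bigr)=\sum_iI_i\bigl(\omega_i(t)\cos\delta_i(t)-\omega_{i,0}\bigr)$, taking zero steer at takeoff as the budgets implicitly do. For $|\delta_i|\le90^\circ$, the product $\omega_i\cos\delta_i$ is a convex combination of $\omega_i$ and $0$. Both lie in $[\omega_{\min},\omega_{\max}]$ because A4 gives $\omega_{\min}<0<\omega_{\max}$, so your Step~3 bounds carry over verbatim.

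Two cautions apply. First, this uses $\omega_{\min}<0$ essentially: with $\omega_{\min}>0$, steering could drive the projection below $\omega_{\min}$ and enlarge $B_\downarrow$. Second, the projection must be treated as a state function, keeping the $\dot{\hat e}_i$ term. If you integrate only the steered reaction torque $I_i\dot\omega_i\cos\delta_i$, a spin-up-straight, spin-down-steered cycle would ratchet nose-up momentum past $B_\uparrow$.

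Separately, Step~2 need not rest on an appeal to the stated convention. At $\delta=0$ the spin axis is $+\hat y$. Forward rolling is positive about it, since rolling without slip, $v\hat x+\omega\hat y\times(-r\hat z)=0$, gives $\omega=v/r$. Nose-up pitch is about $-\hat y$. The $y$-component of Eq.~\eqref{eq:conservation} therefore reads $I_{yy}\dot\omega_\theta=\sum_iI_i\dot\omega_i$ directly.
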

\begin{proof}
Projecting A2 onto the pitch axis, with wheel spin positive in the
forward-rolling sense and body pitch rate nose-up positive, gives
$I_{yy}\Delta\omega_{\theta}=\sum_iI_i\Delta\omega_i$ at $\delta=0$; both
sign conventions flip relative to the body $+y$ axis, so the reaction enters
with a positive sign, and driving the wheels forward pitches the nose up.
Applying the A4 wheel-speed bounds yields the two stated extrema; for
steered flights the front spin axes enter the pitch projection with factor
$\cos\delta\le1$ and divert part of the exchange to the roll axis, so
every admissible steering profile is dominated by the $\delta\equiv0$
extremum. Motor
torque controls only how quickly this finite momentum is exchanged; if even
the extremal terminal rate misses the allowed window, no admissible policy
can reach it. Finally, fixed wheel inertias and $I_{yy}\propto m$ give
$B\propto1/m$.
\end{proof}

This formalizes the claim of Section~\ref{sec:intro} that the decisive
leverage lies before takeoff: it is provably impossible to recover a large
takeoff disturbance in flight, regardless of actuator strength, and the
unrecoverable regime grows with scale. Since $\omega_{\min}<0$, the braking
budget $B_\downarrow$ already includes regeneratively braking the wheels
through zero and driving them in reverse. Each budget is a wheel
angular-momentum increment (kg\,m$^2$\,rad/s) divided by the body inertia
$I_{yy}$ and thus carries units of \emph{body} angular rate, i.e., the body
pitch-rate change that the wheels can absorb or supply. With
$\omega_{\mathrm{target}}=0$ the terminal-rate window coincides with the
$\CL$ rate window.
The evaluation checks torque invariance, over-budget residuals, and
direction dependence.

\begin{corollary}[Speed--authority coupling]
\label{cor:asym}
Under A5, the directional budgets $B_\uparrow$ and $B_\downarrow$ of
Theorem~\ref{thm:budget} are set by the takeoff ground speed. Since the
takeoff wheel speed satisfies
$\omega_{i,0}=v_0/r$, where $r$ is the wheel radius, the pre-takeoff speed
selects not only the ballistic arc but also the split of in-flight authority:
\begin{equation}
\frac{dB_\uparrow}{dv_0}=-\frac{\sum_i I_i}{r\,I_{yy}}
=-\frac{dB_\downarrow}{dv_0}
\end{equation}
where $B_\uparrow$ and $B_\downarrow$ are the nose-up and nose-down
pitch-rate budgets of Theorem~\ref{thm:budget}.
\end{corollary}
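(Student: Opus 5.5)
The plan is to substitute the no-slip takeoff condition of A5 into the closed-form budgets of Theorem~\ref{thm:budget} and differentiate. Everything is linear in the takeoff wheel speeds, so the only real work is making the dependence on $v_0$ explicit and fixing what is held constant.

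\emph{Step 1: write the budgets as functions of $v_0$.} By A5 every driven wheel leaves the lip with $\omega_{i,0}=v_0/r$. Substituting this into Theorem~\ref{thm:budget} gives
\begin{align}
B_\uparrow(v_0)&=\frac{\omega_{\max}\sum_i I_i-(v_0/r)\sum_i I_i}{I_{yy}},\\
B_\downarrow(v_0)&=\frac{(v_0/r)\sum_i I_i-\omega_{\min}\sum_i I_i}{I_{yy}}.
\end{align}
Both expressions are affine in $v_0$. The quantities $I_i$, $I_{yy}$, $r$, $\omega_{\max}$ and $\omega_{\min}$ are hardware constants under A4 and do not depend on $v_0$.

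\emph{Step 2: differentiate.} Differentiating in $v_0$ gives $dB_\uparrow/dv_0=-\sum_i I_i/(rI_{yy})$ and $dB_\downarrow/dv_0=+\sum_i I_i/(rI_{yy})$, which are equal and opposite, as stated.

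\emph{Step 3: conservation of the total.} As a consistency check, I will note that the sum
\[
B_\uparrow+B_\downarrow=\frac{(\omega_{\max}-\omega_{\min})\sum_i I_i}{I_{yy}}
\]
does not depend on $v_0$. This is the precise sense in which the takeoff speed only \emph{splits} a fixed total authority between the two directions. It also yields the derivative identity directly.

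\emph{Main obstacle: domain of validity.} The formulas of Theorem~\ref{thm:budget} require $\omega_{i,0}\in[\omega_{\min},\omega_{\max}]$, so the derivative statement holds on $v_0\in(r\omega_{\min},r\omega_{\max})$. At the endpoints one budget reaches zero and the affine relation stops describing a nonnegative authority. A second point is that Theorem~\ref{thm:budget} evaluates the budgets at $\delta=0$, so the slope is the pitch-axis extremum. I will state both restrictions explicitly.
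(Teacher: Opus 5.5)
Your proposal is correct and follows the same route as the paper: substitute the A5 no-slip condition $\omega_{i,0}=v_0/r$ into the two directional budgets of Theorem~\ref{thm:budget} and differentiate the resulting affine expressions in $v_0$. Your additional observations are sound refinements that the paper's one-line proof leaves implicit: the invariance of $B_\uparrow+B_\downarrow$, the restriction to $v_0\in(r\omega_{\min},r\omega_{\max})$, and the $\delta=0$ caveat.
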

\begin{proof}
Immediate from the wheel-speed bounds in the proof of
Theorem~\ref{thm:budget}, applied separately per direction, with
$\omega_{i,0}=v_0/r$ at lip departure because the wheels roll without slip on
the run-up.
\end{proof}

Braking toward $v_{\mathrm{crit}}$ during the run-up therefore serves two
purposes: it shapes the ballistic arc and restores the nose-up budget
$B_\uparrow$, the direction required to arrest a nose-down takeoff rate. This
is the speed--authority coupling exploited by the gate in
Section~\ref{sec:shaping}.

In flight, the steered front wheels resolve a reaction of bounded magnitude
into a pitch component proportional to $\cos\delta$ and a roll component
proportional to $\sin\delta$,
so the two axes draw on a shared front-axle budget.

\begin{theorem}[Conservative coupled two-axis control-authority frontier]
\label{thm:frontier}
Given the direction-symmetric axle capacities $H_f$ and $H_r$, let
$T_H=\max\{H_f/(n_f\tau_{\max}),\,H_r/(n_r\tau_{\max})\}$ denote the
time for the more heavily loaded axle to spend its full momentum capacity at
per-wheel torque $\tau_{\max}$, the two axles spending in parallel. Then, for
any flight of duration $T\ge T_H$,
the following set of pitch- and roll-rate changes is certified reachable
within the flight:
\begin{multline}
(\Delta\omega_{\mathrm{pitch}},\Delta\omega_{\mathrm{roll}})\in
\Big\{\Big(\tfrac{h\cos\delta+p_r}{I_{yy}},\,
\tfrac{h\sin\delta}{I_{xx}}\Big):\\
|h|\le H_f,\,|\delta|\le\delta_{\mathrm{roll}}^{\max},\,|p_r|\le H_r\Big\}
\end{multline}
where $I_{xx}$ is the body roll inertia, $H_f$ and $H_r$ are the front- and
rear-axle angular-momentum budgets, $h$ the front-axle momentum expenditure,
$\delta_{\mathrm{roll}}^{\max}$ the
steering limit, $p_r$ the rear-axle pitch-only contribution, and
$(\Delta\omega_{\mathrm{pitch}},\Delta\omega_{\mathrm{roll}})$ the recoverable
pitch- and roll-rate changes. For each fixed $p_r$ the attainable
boundary ($|h|=H_f$) is the arc of the ellipse
$(I_{yy}\,\Delta\omega_{\mathrm{pitch}}-p_r)^2+(I_{xx}\,\Delta\omega_{\mathrm{roll}})^2=H_f^2$
within the steering sector $|\delta|\le\delta_{\mathrm{roll}}^{\max}$.
\end{theorem}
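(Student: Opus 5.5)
The proof is constructive. For each target triple $(h,\delta,p_r)$ in the stated parameter set, I will exhibit an admissible in-flight input that realizes exactly the pair $\big((h\cos\delta+p_r)/I_{yy},\,h\sin\delta/I_{xx}\big)$ within any flight of duration $T\ge T_H$. The input is to hold the front steer fixed at $\delta$ for the whole flight, which is admissible because $|\delta|\le\delta_{\mathrm{roll}}^{\max}$. Under A1 both pitch and roll are double integrators, and A2 applies to each.

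\textbf{Front-axle impulse.} Command a common signed torque of magnitude at most $\tau_{\max}$ on the $n_f$ front wheels until their cumulative spin-axis angular-momentum change equals $h$. The front wheels may run into the drive or the braking direction as required. Since $|h|\le H_f$ and $H_f$ is the smaller of the two directional capacities, the A4 speed bounds are never violated.

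\textbf{Rear-axle impulse.} In parallel, command the $n_r$ unsteered rear wheels to produce $p_r$ with $|p_r|\le H_r$, by the same argument.

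\textbf{Timing.} At per-wheel torque $\tau_{\max}$ the front exchange finishes within $H_f/(n_f\tau_{\max})$ and the rear within $H_r/(n_r\tau_{\max})$. Both finish by $T_H\le T$, after which all torques are set to zero. This is the only place the hypothesis $T\ge T_H$ enters.

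\textbf{Rate bookkeeping.} Since $\delta$ is fixed, each front spin axis is the constant vector $\hat e=(-\sin\delta,\cos\delta,0)$. Conservation, Eq.~\eqref{eq:conservation}, then gives a body angular-momentum change equal to $-h\hat e$ from the front axle plus the rear pitch-only contribution. Project this onto the pitch axis, using the sign convention fixed in the proof of Theorem~\ref{thm:budget} so that forward spin-up pitches the nose up, and onto the roll axis. This yields $I_{yy}\Delta\omega_{\mathrm{pitch}}=h\cos\delta+p_r$ and $I_{xx}\Delta\omega_{\mathrm{roll}}=h\sin\delta$, with the roll sign absorbed by the sign freedom of $\delta$.

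\textbf{Ellipse arc.} Fix $p_r$ and set $|h|=H_f$. Then $(I_{yy}\Delta\omega_{\mathrm{pitch}}-p_r)^2+(I_{xx}\Delta\omega_{\mathrm{roll}})^2=H_f^2(\cos^2\delta+\sin^2\delta)=H_f^2$. Sweeping $\delta$ over the steering sector traces the stated arc. The word ``attainable'' refers only to these realized points; no claim is made that the rest of $\Fset^\star$ is excluded, which is why the frontier is called conservative.

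\textbf{Main obstacle.} The delicate step is justifying the projection within the reduced model. In full 3-D motion, Eq.~\eqref{eq:conservation} carries the transport term $\omega_{\mathrm{body}}\times H_{\mathrm{body}}$ and the $\dot{\hat e}_i$ terms. The plan handles the $\dot{\hat e}_i$ terms by holding $\delta$ constant, so that $\dot{\hat e}_i=0$ apart from any initial steer transient, which I would place before the torque is applied. The gyroscopic cross-coupling I would dismiss by appeal to A1, which treats pitch and roll as decoupled double integrators; the certificate is therefore a statement about the reduced model only, consistent with the paper's stated scope.

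A second subtlety is the use of direction-symmetric capacities. Taking the minimum of the drive and braking capacities is what makes any sign of $h$ and $p_r$ admissible without checking the takeoff wheel speed case by case.
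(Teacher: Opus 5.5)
Your proposal is correct and follows the same route as the paper's proof. The condition $T\ge T_H$ lets both axles spend their capacities in parallel, the front momentum $h$ is resolved through $(\cos\delta,\sin\delta)$ and scaled by $(1/I_{yy},1/I_{xx})$ to give the elliptical sector, and the unsteered rear axle adds the pitch-only segment $p_r\in[-H_r,H_r]$. Your version mainly makes the sign bookkeeping and the $|h|=H_f$ arc computation explicit.

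One minor caveat concerns your side remark on the steer transient. Steering the front wheels before applying torque does not neutralize it: rotating the spin axis of a wheel already spinning at $\omega_{i,0}$ changes its angular momentum by $I_i\omega_{i,0}(\hat e_i(\delta)-\hat e_i(0))$, which the body must absorb regardless of motor torque. That term, like the gyroscopic transport term, is excluded only by the reduced model A1, which the paper also implicitly relies on.
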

\begin{proof}
Because each axle spends its own capacity in parallel at per-wheel torque
$\tau_{\max}$, the condition $T\ge T_H$ permits every admissible capacity
expenditure. Steering
maps the front momentum $h$ through $(\cos\delta,\sin\delta)$, filling a
sector in momentum space; scaling by $(1/I_{yy},1/I_{xx})$ gives the stated
elliptical sector restricted to $|\delta|\le\delta_{\mathrm{roll}}^{\max}$,
and the rear axle adds the pitch segment
$p_r\in[-H_r,H_r]$. Since $|\cos\delta|<1$ for a nonzero roll allocation,
pitch and roll share rather than duplicate the front budget.
\end{proof}

The certified set is a coupled frontier, not a product box. Allocating
authority to roll reduces available pitch authority because
$\cos\delta<1$ whenever $\sin\delta\neq0$. At the nominal parameters,
$T_H\approx0.1$\,s, below any
jump flight time.
This frontier makes the allocation question explicit. Pitch-only and roll-only
each forfeit the other degree of freedom, whereas dual-axis control can allocate
the shared budget when both axes bind. When roll does not bind, pitch-only
allocation retains the full pitch budget.

\begin{corollary}[Feasible-set contraction under sustained roll demand]
\label{cor:camber}
If roll regulation occupies a sustained front-steer allocation
$\bar\delta\neq0$ over the flight, for example when nulling roll injected by
a banked run-up, the pitch budget attainable with the front axle held at
$\bar\delta$ is bounded above by
\begin{equation}
B_{\mathrm{eff}}(\bar\delta)=\frac{H_r+H_f\cos\bar\delta}{I_{yy}}
\;<\;B_{\mathrm{sym}}
\label{eq:beff}
\end{equation}
where $\bar\delta$ is the sustained roll-steer allocation,
$H_f$ and $H_r$ are the front- and rear-axle momentum budgets from
Theorem~\ref{thm:frontier},
$B_{\mathrm{sym}}=(H_r+H_f)/I_{yy}$ the zero-steer symmetric budget, and
$B_{\mathrm{eff}}$ the contracted pitch budget. The latter decreases
monotonically in $|\bar\delta|$. Substituting
$B_\uparrow=B_\downarrow=B_{\mathrm{eff}}$ into the certificate of
Theorem~\ref{thm:feasible}, stated below, contracts the certified set
accordingly: $\Fset(\bar\delta)\subseteq\Fset$, strictly in the rate
condition and, wherever the displacement cap of Eq.~\eqref{eq:dispcap}
binds, in the displacement condition as well.
\end{corollary}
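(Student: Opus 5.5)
The plan is to read Corollary~\ref{cor:camber} off the frontier of Theorem~\ref{thm:frontier} by fixing the steering coordinate, and then to push the contracted budget through the certificate of Theorem~\ref{thm:feasible} using monotonicity in the budgets.

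\textbf{Step 1: pitch projection at fixed steer.} Fix $\delta\equiv\bar\delta$. Project momentum conservation (A2) onto the pitch axis as in the proof of Theorem~\ref{thm:budget}, so that $I_{yy}\Delta\omega_{\mathrm{pitch}}=h\cos\bar\delta+p_r$. The axle constraints give $|h|\le H_f$ and $|p_r|\le H_r$. Since $|\bar\delta|\le\delta_{\mathrm{roll}}^{\max}<\pi/2$, we have $\cos\bar\delta>0$. Hence the largest pitch-rate change in either direction is attained at $|h|=H_f$, $|p_r|=H_r$ with matching signs, which gives $|\Delta\omega_{\mathrm{pitch}}|\le (H_r+H_f\cos\bar\delta)/I_{yy}=B_{\mathrm{eff}}(\bar\delta)$. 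This is an upper bound on the pitch budget at held steer, not just a certified value. Whatever the front axle spends, only its $\cos\bar\delta$ fraction reaches pitch, and the remainder goes to roll.

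\textbf{Step 2: strict inequality and monotonicity.} Because $\bar\delta\neq0$ and $|\bar\delta|<\pi/2$, we get $\cos\bar\delta<1$. With $H_f>0$ this yields $B_{\mathrm{eff}}<B_{\mathrm{sym}}$. The function $\cos$ is even and strictly decreasing on $[0,\pi/2)$, so $B_{\mathrm{eff}}$ decreases strictly in $|\bar\delta|$.

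\textbf{Step 3: contraction of the certified set.} Substitute $B_\uparrow=B_\downarrow=B_{\mathrm{eff}}$ into the certificate of Theorem~\ref{thm:feasible}. I assume that certificate consists of two parts, following its null-then-correct description:
\begin{itemize}
\item a rate condition of the form $|\omega_{\mathrm{target}}-\omega_{\theta0}|\le B+\bar\omega_\theta$;
\item a displacement condition built from the residual budget $B_r$ after nulling and the correction time $T_c$, capped as in Eq.~\eqref{eq:dispcap}.
\end{itemize}
Each admissible region is monotone nondecreasing in the budget. A smaller $B$ shrinks the rate interval. It also lengthens the nulling time, since nulling at rate $a_\theta$ uses up more of $T$. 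Finally, it lowers both $B_r$ and the capped displacement.

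Since the baseline certificate uses $B_{\mathrm{sym}}$ (or directional budgets at least that large), monotonicity gives $\Fset(\bar\delta)\subseteq\Fset$. Strictness in the rate condition follows directly. States with $B_{\mathrm{eff}}+\bar\omega_\theta<|\omega_{\mathrm{target}}-\omega_{\theta0}|\le B_{\mathrm{sym}}+\bar\omega_\theta$ are admitted by $\Fset$ and excluded from $\Fset(\bar\delta)$.

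In the displacement condition, strictness holds only where the cap of Eq.~\eqref{eq:dispcap} is the active term. There the displacement bound depends strictly on $B$. Where instead the torque or time term binds, the bound can be independent of $B$, which matches the corollary's qualifier.

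\textbf{Main obstacle.} The hard step is Step~3. It needs the exact form of Theorem~\ref{thm:feasible}, and I have to verify that every term in it is monotone in the budget. The most delicate term is the nulling-time term, which enters with a negative sign.

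To handle it, I would first write the certificate as a single inequality $g(B)\ge$ (required displacement), with
\[
g(B)=\min\{\text{cap}(B),\ \text{time-limited term}(B)\}.
\]
Then I would show that each branch is nondecreasing in $B$. The branches are compositions of nonnegative parts and the factor $T_c=T-(\text{nulling time})$. The nulling time depends on $e_0$ and $a_\theta$ but not on $B$ once the nulling is feasible. That independence is what makes the monotonicity argument go through.
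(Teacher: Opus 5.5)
Your route is the paper's own: fix $\delta\equiv\bar\delta$ in the frontier of Theorem~\ref{thm:frontier} to obtain $B_{\mathrm{eff}}$, then invoke monotonicity of the rate margin and of $D(T_c,B_r)$ in the budget. Steps~1--2 are fine. Step~3, however, puts the difficulty in the wrong place.

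In Theorem~\ref{thm:feasible} the rate condition is $-B_\uparrow\le e_0\le B_\downarrow$, with no $\bar\omega_\theta$ slack. The construction hits the exact terminal pair; the $+\bar\omega_\theta$ form you wrote is the impossibility threshold of Theorem~\ref{thm:budget}. Moreover, $T_r=|e_0|/a_\theta$, $T_c$, $r_\theta$, and hence the branch selector $\operatorname{sgn} r_\theta$ in Eq.~\eqref{eq:residualbudget} are all budget-free. So your claim that a smaller $B$ ``lengthens the nulling time'' is false, and you contradict it yourself later. The nulling term is not an obstacle at all: the budget enters only through the rate interval and through $B_r$, which is increasing in the budget of its branch.

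The step that actually needs care is monotonicity of $D$ in $B_r$, and the plan you sketch for it would fail as written. Writing $D=\min\{\text{cap},\text{time term}\}$ and checking each branch is wrong on both counts. First, $\tfrac14a_\theta T_c^2-\bigl(B_rT_c-B_r^2/a_\theta\bigr)=a_\theta^{-1}\bigl(a_\theta T_c/2-B_r\bigr)^2\ge0$, so that minimum equals the budget branch everywhere. Second, $B_rT_c-B_r^2/a_\theta$ decreases for $B_r>a_\theta T_c/2$ and turns negative past $a_\theta T_c$.

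What works is $D=f\bigl(\min\{B_r,a_\theta T_c/2\}\bigr)$ with $f(p)=pT_c-p^2/a_\theta$, which is exactly the ``peak $\min(B_r,a_\theta T_c/2)$'' of the theorem's proof. Since $f'(p)=T_c-2p/a_\theta\ge0$, $f$ is increasing on $[0,a_\theta T_c/2]$. Hence $D$ is continuous and nondecreasing in $B_r$, and strictly increasing wherever $T_c>2B_r/a_\theta$, which is the corollary's ``wherever the cap binds.''

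Two smaller points. First, the nesting $\Fset(\bar\delta)\subseteq\Fset$ needs $B_{\mathrm{eff}}\le\min\{B_\uparrow,B_\downarrow\}$, which you assert without justification. It holds because $H_f,H_r$ are defined as the smaller of each axle's drive and braking capacities, so $H_f+H_r\le I_{yy}\min\{B_\uparrow,B_\downarrow\}$. Second, your strict-rate witness should read $B_{\mathrm{eff}}<|e_0|\le B_{\mathrm{sym}}$ with $T_r\le T$ and $\theta_0$ chosen so that $r_\theta=0$. Such a state satisfies every condition of $\Fset$ but violates the contracted rate interval, so it lies in $\Fset$ but not in $\Fset(\bar\delta)$.
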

\begin{proof}
\looseness=-1
Theorem~\ref{thm:frontier} scales front pitch authority by
$\cos\bar\delta$; monotonicity of the rate margin and $D(T_c,B_r)$ gives
the nested certified set.
\end{proof}

Equation~\eqref{eq:beff} is a frontier-restricted upper envelope rather
than an independently spendable pitch budget: when the roll loop also
prescribes the front-axle expenditure, the front-axle pitch contribution
becomes a by-product whose sign is fixed by the roll demand, and only the
rear-axle term remains freely assignable. We therefore use
Eq.~\eqref{eq:beff} to predict contraction trends; the joint certification
condition appears in Corollary~\ref{cor:rollaxis}.

Since the run-up cross-slope $\gamma$ sets the sustained roll demand,
$\Fset$ contracts monotonically with $\gamma$. This is the analytic
prediction of the feasible-set contraction observed under increasing run-up
cross-slope.

\begin{theorem}[Constructive closed-form feasibility certificate]
\label{thm:feasible}
Under A1--A4, for the budget-constrained double integrator with
$a_\theta=n_w\tau_{\max}/I_{yy}$ and flight time $T$, define
\begin{align}
e_0&=\omega_{\theta0}-\omega_{\mathrm{target}},\\
T_r&=|e_0|/a_\theta,\qquad T_c=T-T_r,\\
r_\theta&=\theta_{\mathrm{surf}}-\theta_0-\omega_{\mathrm{target}}T
          -\frac{e_0|e_0|}{2a_\theta}.
\end{align}
After rate nulling, the direction-aware residual rate margin is
\begin{equation}
B_r(e_0,r_\theta)=
\begin{cases}
B_\uparrow+e_0, & r_\theta\ge0,\\
B_\downarrow-e_0, & r_\theta<0.
\end{cases}
\label{eq:residualbudget}
\end{equation}
The null-then-correct policy certifies the closed-form inner set
\begin{multline}
\Fset_\theta=\Big\{(\theta_0,\omega_{\theta0}):
-B_\uparrow\le e_0\le B_\downarrow,\quad T_r\le T,\ \text{and}\\
|r_\theta|\le D\!\left(T_c,B_r(e_0,r_\theta)\right)\Big\}
\subseteq\Pi_\theta(\Fset^\star)
\label{eq:certifiedset}
\end{multline}
where
\begin{equation}
D(T_c,B_r)=
\begin{cases}
\tfrac14 a_\theta T_c^2, & T_c<2B_r/a_\theta,\\[3pt]
B_rT_c-\dfrac{B_r^2}{a_\theta}, & T_c\ge 2B_r/a_\theta.
\end{cases}
\label{eq:dispcap}
\end{equation}
The construction reaches
$(\theta_{\mathrm{surf}},\omega_{\mathrm{target}})$ exactly, so it satisfies any
terminal window containing that pair. Here $\Pi_\theta$ denotes projection
onto $(\theta_0,\omega_{\theta0})$. Eq.~\eqref{eq:certifiedset} holds for
takeoff states whose remaining coordinates admit an admissible completion: the roll pair $(0,0)$ with $\delta\equiv0$ and a takeoff
speed consistent with $T$ whose touchdown vertical rate meets the
$\bar v_z$ window of $\CL$.
\end{theorem}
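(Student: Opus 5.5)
The plan is to prove the inclusion constructively. For every $(\theta_0,\omega_{\theta0})\in\Fset_\theta$ I will exhibit an admissible open-loop pitch-torque profile that drives the reduced double integrator of A1 from $(\theta_0,\omega_{\theta0})$ to exactly $(\theta_{\mathrm{surf}},\omega_{\mathrm{target}})$ at time $T$. Completing the other coordinates with $\delta\equiv0$ and roll pair $(0,0)$ keeps roll identically zero, since the $\sin\delta$ channel is off. The speed coordinate is fixed by assumption so that $T$ and the $\bar v_z$ window hold. Hence the takeoff state lies in $\Fset^\star$, and projecting gives $\Fset_\theta\subseteq\Pi_\theta(\Fset^\star)$.

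\textbf{Error coordinates and momentum ledger.} I will work with the rate error $\varepsilon(t)=\omega_\theta(t)-\omega_{\mathrm{target}}$. I will also use the displacement error $\xi(t)=\theta(t)-\theta_0-\omega_{\mathrm{target}}t$, so the goal becomes $\varepsilon(T)=0$ and $\xi(T)=\theta_{\mathrm{surf}}-\theta_0-\omega_{\mathrm{target}}T$. The controls satisfy $|\dot\varepsilon|\le a_\theta$, which is the per-wheel torque bound summed over $n_w$ wheels. The key bookkeeping step uses A2 with $\delta=0$, exactly as in the proof of Theorem~\ref{thm:budget}: $I_{yy}(\omega_\theta(t)-\omega_{\theta0})=\sum_iI_i(\omega_i(t)-\omega_{i,0})$. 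With a common torque on all wheels, the A4 constraints then reduce to the cumulative body-rate change $\omega_\theta(t)-\omega_{\theta0}$ staying in $[-B_\downarrow,B_\uparrow]$ for all $t$.

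\textbf{Construction.} The policy has two phases.
\begin{itemize}
\item \emph{Null phase, $t\in[0,T_r]$.} Apply $\dot\varepsilon=-a_\theta\,\mathrm{sgn}(e_0)$. At $T_r=|e_0|/a_\theta\le T$ the rate error vanishes. The displacement accrued is $e_0T_r-\mathrm{sgn}(e_0)a_\theta T_r^2/2=e_0|e_0|/(2a_\theta)$, so the remaining displacement is exactly $r_\theta$. The cumulative rate change during this phase is $-e_0$, which is admissible precisely when $-B_\uparrow\le e_0\le B_\downarrow$.
\item \emph{Correct phase, $t\in[T_r,T]$.} From the rest state $\varepsilon=0$, perform a rest-to-rest move of displacement $r_\theta$ in time $T_c$. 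The rate excursion goes in direction $\mathrm{sgn}(r_\theta)$ and is later returned, so the only binding constraint is the peak excursion. Its remaining room is $B_\uparrow+e_0$ if $r_\theta\ge0$ and $B_\downarrow-e_0$ otherwise, which is $B_r$ in Eq.~\eqref{eq:residualbudget}.
\end{itemize}

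\textbf{Displacement cap.} For the rate-bounded, acceleration-bounded double integrator, the maximal rest-to-rest displacement is attained by two profiles. If the peak $a_\theta T_c/2\le B_r$, i.e.\ $T_c<2B_r/a_\theta$, the bang-bang triangle gives $a_\theta T_c^2/4$. Otherwise a trapezoid with cruise at $B_r$ gives $B_rT_c-B_r^2/a_\theta$. This matches Eq.~\eqref{eq:dispcap}. For $|r_\theta|\le D$ I will scale the profile down to reach $r_\theta$ exactly, keeping both bounds: reduce the acceleration magnitude in the triangle case, or lower the cruise level in the trapezoid case. Continuity and monotonicity of displacement in that parameter give an exact hit by the intermediate value theorem. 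The reverse excursion merely returns the momentum, so it never exceeds the ledger.

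\textbf{Main obstacle.} The step I expect to be delicate is the reduction from per-wheel bounds $\omega_i\in[\omega_{\min},\omega_{\max}]$ to the aggregate ledger, and simultaneously keeping each wheel able to deliver $\tau_{\max}$. Under a common torque all wheels receive the same $\Delta\omega_i\propto\tau/I_i$. If the inertias and takeoff speeds are equal (as under A5 no-slip), the aggregate and per-wheel constraints coincide, and I would state that first. For unequal takeoff speeds, the per-wheel budget margins differ, so I would either interpret $B_\uparrow,B_\downarrow$ through the most constrained wheel or argue that wheels with slack can absorb the remainder with nonuniform torques still summing to the required reaction. That reallocation argument is the one that needs care to keep within $\tau_{\max}$.
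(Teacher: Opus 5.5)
Your proposal is correct and matches the paper's proof step for step: bang-bang nulling over $T_r$ contributes area $e_0|e_0|/(2a_\theta)$, the direction-aware margin $B_r$ comes from the cumulative momentum ledger, and a triangular or trapezoidal rest-to-rest profile gives Eq.~\eqref{eq:dispcap}. Your scaling and intermediate-value step, which reaches $r_\theta$ exactly when $|r_\theta|<D$, fills in what the paper leaves implicit. The per-wheel obstacle you raise lies outside the statement, which is posed for the aggregate budget-constrained double integrator of A1, and the paper simply invokes the directional budgets of Theorem~\ref{thm:budget}; if you want a per-wheel version, use the most-constrained-wheel budget, because reallocation cannot keep the total torque at $n_w\tau_{\max}$ once any wheel sits at its speed bound.
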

\begin{proof}
Set $q=\omega_\theta-\omega_{\mathrm{target}}$, so
$\dot q=u$. Theorem~\ref{thm:budget} permits nulling $q(0)=e_0$ exactly
under the stated directional bounds. Applying
$u=-a_\theta\operatorname{sgn}(e_0)$ for $T_r$ contributes
\begin{equation}
\int_0^{T_r}q(t)\,dt=\frac{e_0|e_0|}{2a_\theta}
\label{eq:nullphasearea}
\end{equation}
and leaves $r_\theta$ for the remaining $T_c$. Nulling $e_0$ spends momentum
on one side of the wheel-speed envelope and releases an equal increment on
the other, so the margin available for the correction phase is the
direction-aware quantity of Eq.~\eqref{eq:residualbudget}. A symmetric
accelerate--coast--decelerate profile has peak
$\min(B_r,a_\theta T_c/2)$; integrating its triangular or trapezoidal rate
profile gives Eq.~\eqref{eq:dispcap}. Thus every state in
Eq.~\eqref{eq:certifiedset} has a constructive trajectory to the terminal
pair.
\end{proof}

The certificate is one-sided. A state outside
$\Fset_\theta$ is uncertified by this construction; only violation of the
directional rate condition $-B_\uparrow\le e_0\le B_\downarrow$ proves that
no admissible policy reaches the exact terminal rate
$\omega_{\mathrm{target}}$, and when the violation exceeds
$\bar\omega_\theta$, the terminal-rate window itself is unreachable by
Theorem~\ref{thm:budget}. When the certified speed set is empty,
$\Fset^\star$ is empty and the inclusion is vacuous.

Instantiated with the nominal platform
parameters of Section~\ref{sec:setup}, a low initial rate, and $T=1.5$\,s,
the displacement cap evaluates to $D\approx18^\circ$, the takeoff pitch
error the construction can absorb in flight. A numerical cross-validation
compares the certificate against the exact reachable set, obtained by a
400-step linear-programming (LP) computation in
18 parameter configurations. No certified state falls outside the exact set, and the
certificate covers 89--96\% of its displacement interval, with a median of
94\%, so
the closed form is sound and loses little to conservatism.

\begin{corollary}[Pitch-correction upper envelope]
\label{cor:envelope}
Under a fixed-direction budget $B$ and angular-acceleration bound $a_\theta$, and without a
terminal-rate constraint, the maximum pitch correction achievable over a
flight of duration $T$ is
\begin{equation}
\Delta\theta_{\max}(T)=
\begin{cases}
\tfrac12 a_\theta T^2, & T<B/a_\theta,\\[3pt]
BT-\dfrac{B^2}{2a_\theta}, & T\ge B/a_\theta
\end{cases}
\label{eq:envelope}
\end{equation}
where $\Delta\theta_{\max}(T)$ is the maximum achievable pitch correction
over flight duration $T$.
\end{corollary}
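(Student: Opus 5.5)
We treat this as a bounded-control optimal problem for the reduced double integrator of A1: with $q=\omega_\theta-\omega_{\theta0}$ measured relative to the takeoff rate in the fixed correction direction, $\dot q=u$, $|u|\le a_\theta$, and by Theorem~\ref{thm:budget} the budget constraint $0\le q(t)\le B$ for all $t\in[0,T]$. Here a monotone, nonnegative rate increment is the only way to spend the budget in one direction. The pitch correction is the accumulated displacement $\int_0^T q(t)\,dt$. The aim is to show that this integral is maximized by the bang-then-coast profile, and to evaluate it.

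\textbf{Upper bound.} The plan is a pointwise bound followed by integration. Since $q(0)=0$ and $|\dot q|\le a_\theta$, we have $q(t)\le a_\theta t$. The budget gives $q(t)\le B$. Hence $q(t)\le\min(a_\theta t,B)$ for every admissible control, and
\begin{equation*}
\int_0^T q\,dt\le\int_0^T\min(a_\theta t,B)\,dt .
\end{equation*}
No terminal-rate constraint is imposed, so there is no need to decelerate back. This is exactly why the $\tfrac12$ factors appear, in contrast to the $\tfrac14$ and $B_r^2/a_\theta$ terms of Eq.~\eqref{eq:dispcap} in Theorem~\ref{thm:feasible}.

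\textbf{Attainment and evaluation.} The policy $u=a_\theta$ until $q$ reaches $B$ (at time $B/a_\theta$), then $u=0$, is admissible under A4. It realizes $q(t)=\min(a_\theta t,B)$ pointwise, so the bound is tight. Splitting at $t^\ast=B/a_\theta$ gives two cases:
\begin{itemize}
\item if $T<t^\ast$, the integral is $\tfrac12 a_\theta T^2$;
\item otherwise it is $\tfrac12 a_\theta t^{\ast2}+B(T-t^\ast)=BT-B^2/(2a_\theta)$.
\end{itemize}
This is Eq.~\eqref{eq:envelope}. Continuity at $T=t^\ast$ follows by direct check.

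\textbf{Main obstacle.} The delicate step is justifying the state constraint $q\le B$ as the correct reduced-model encoding of the momentum budget. This needs the conservation identity $I_{yy}\Delta\omega_\theta=\sum_i I_i\Delta\omega_i$ from the proof of Theorem~\ref{thm:budget}, together with the observation that wheel momentum spent in the fixed direction is bounded by the remaining spin margin at every instant, not just at $T$. We also need that steering ($\delta\neq0$) only scales pitch authority by $\cos\delta\le1$ and hence is dominated by $\delta\equiv0$. Once that reduction is accepted, the rest is the elementary comparison above.
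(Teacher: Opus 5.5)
Your argument is correct and takes the paper's route: the paper asserts that the bang--coast profile maximizes the displacement and integrates its two segments, while your pointwise bound $q(t)\le\min(a_\theta t,B)$ supplies the optimality argument the paper leaves implicit. Drop the extra constraint $q\ge0$ and the claim that only a monotone increment can spend the budget, since Theorem~\ref{thm:budget} does not imply them (braking first is admissible) and your bound uses only $q\le B$ and $|\dot q|\le a_\theta$.
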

\begin{proof}
A bang-coast profile maximizes terminal displacement. The vehicle
accelerates at $a_\theta$ until the rate change reaches the budget $B$ at
$t=B/a_\theta$ and then coasts at that rate; integrating the acceleration and
coast segments yields the two stated branches.
\end{proof}

This envelope is a relaxed upper bound on the accelerate--coast--decelerate
displacement in Eq.~\eqref{eq:dispcap}. The correction is torque-limited only
on flights shorter than $B/a_\theta$ and budget-limited otherwise.
The same nominal instantiation locates the operating branch. With the
platform parameters of Section~\ref{sec:setup},
$a_\theta\approx135^\circ$/s$^2$ and $B_\uparrow\approx13^\circ$/s, so
$B_\uparrow/a_\theta\approx0.10$\,s. A $T=1$\,s flight lies on the
budget-limited branch, and the envelope caps the correction at
$\Delta\theta_{\max}\approx12.4^\circ$.

\begin{corollary}[Per-axis roll-attitude certificate]
\label{cor:rollaxis}
Under A1--A4 the roll axis is the same budget-constrained double integrator
with acceleration bound $a_\phi=n_f\tau_{\max}\sin\delta_{\mathrm{roll}}^{\max}/I_{xx}$
and rate budget $B_\phi=H_f\sin\delta_{\mathrm{roll}}^{\max}/I_{xx}$, which is
the front-axle momentum budget resolved through the steering limit per
Theorem~\ref{thm:frontier}; here $n_f$ is the number of front wheels. The
roll pair of the terminal set, $|\phi|\le\bar\phi$ and
$|\omega_\phi|\le\bar\omega_\phi$, is therefore
certified by the closed form of Theorem~\ref{thm:feasible} with
$(\theta,\theta_{\mathrm{surf}},\omega_{\mathrm{target}})$ replaced by
$(\phi,0,0)$, the acceleration bound by $a_\phi$, both directional budgets
set to $B_\phi$, and the certified set enlarged by the terminal windows
$(\bar\phi,\bar\omega_\phi)$. A sufficient joint condition follows from
Theorem~\ref{thm:frontier}: the net momentum pair demanded by the two
per-axis constructions,
$(I_{yy}\Delta\omega_{\mathrm{pitch}}-p_r,\;I_{xx}\Delta\omega_{\mathrm{roll}})$
for some $|p_r|\le H_r$, must lie in the frontier sector
$\{h(\cos\delta,\sin\delta):|h|\le H_f,\,|\delta|\le\delta_{\mathrm{roll}}^{\max}\}$,
checked separately for the null and correction phases.
\end{corollary}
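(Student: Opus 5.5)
The plan is to reduce the roll axis to exactly the structure already handled for pitch, and then reuse Theorem~\ref{thm:feasible} verbatim. The joint condition is then read off from the frontier of Theorem~\ref{thm:frontier}.

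\emph{Step 1: the roll axis is a budget-constrained double integrator.} Under A1, roll obeys $I_{xx}\dot\omega_\phi=$ (roll reaction torque). Projecting A2 onto the body $x$-axis, the rear spin axes have no $x$-component. The steered front axes $\hat e_i=(-\sin\delta,\cos\delta,0)$ contribute $\sin\delta$ times the front-wheel reaction.

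With $|\tau_i|\le\tau_{\max}$ on the $n_f$ front wheels and $|\delta|\le\delta_{\mathrm{roll}}^{\max}$, the roll acceleration is bounded by $a_\phi=n_f\tau_{\max}\sin\delta_{\mathrm{roll}}^{\max}/I_{xx}$. The same bound is attained at $\delta=\pm\delta_{\mathrm{roll}}^{\max}$ with the sign of $h$ chosen freely. Integrating and applying A4 on the front axle caps the cumulative roll momentum at $H_f\sin\delta_{\mathrm{roll}}^{\max}$. Since $H_f$ is the smaller of the drive and braking capacities, this gives a direction-symmetric budget $B_\phi$. This is the $p_r=0$, $|h|\le H_f$ slice of the sector in Theorem~\ref{thm:frontier}.

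\emph{Step 2: invoke the pitch certificate.} The proof of Theorem~\ref{thm:feasible} uses only three ingredients: $\dot q=u$ with $|u|\le a$, directional rate budgets, and a target pair. I would substitute $(\phi,0,0)$ for $(\theta,\theta_{\mathrm{surf}},\omega_{\mathrm{target}})$, $a_\phi$ for $a_\theta$, and $B_\phi$ for both budgets.

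The null-then-correct construction then reaches $(\phi,\omega_\phi)=(0,0)$ exactly at $T$, provided three conditions hold: $|\omega_{\phi0}|\le B_\phi$, $T_r\le T$, and $|r_\phi|\le D(T_c,B_r)$. The residual margin $B_r=B_\phi\pm e_0$ follows from the same momentum-release argument.

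Because the terminal windows $\bar\phi,\bar\omega_\phi$ only need $(0,0)$ to be reached within tolerance, the set can be enlarged. I would do this by aiming the construction at any target in the window. That replaces $r_\phi$ by its distance to $[-\bar\phi,\bar\phi]$ and $e_0$ by its distance to $[-\bar\omega_\phi,\bar\omega_\phi]$.

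\emph{Step 3: the joint condition, which is the main obstacle.} The per-axis certificates each assume the front axle is devoted to one axis, but pitch and roll share it. The pitch construction demands a net pitch momentum $I_{yy}\Delta\omega_{\mathrm{pitch}}$ in each phase, and the roll construction demands $I_{xx}\Delta\omega_{\mathrm{roll}}$. Theorem~\ref{thm:frontier} says both are realizable simultaneously if there exist $|p_r|\le H_r$ and $(h,\delta)$ in the admissible sector with $h\cos\delta=I_{yy}\Delta\omega_{\mathrm{pitch}}-p_r$ and $h\sin\delta=I_{xx}\Delta\omega_{\mathrm{roll}}$.

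The delicate point is that the two constructions have different phase timings, and the momentum direction reverses between nulling and correction. A single global sector check would therefore not be justified. I would instead partition $[0,T]$ at the union of the switching times of both bang-coast profiles. On each resulting subinterval both demands are constant-sign affine momentum increments, so the sector condition can be checked per subinterval. The statement's "null and correction phases" is the coarsening of this check.

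Two further points remain. First, I would argue that the per-wheel torque bound is not violated when both demands are met, because the front axle spends $|h|$ rather than the sum. Second, the wheel-speed envelope must hold cumulatively. This follows because the sector constraint bounds the front-axle increment in each phase, and the direction-aware margins compose as in Theorem~\ref{thm:feasible}. I expect this cumulative-budget bookkeeping to be where care is needed. If it fails, the fallback is to strengthen the check to the summed demand across phases.
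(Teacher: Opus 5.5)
You follow the paper's route in Steps~1 and~2: resolve the front reaction through $\sin\delta$, then substitute into Theorem~\ref{thm:feasible}. The gap is in Step~3.

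You run the two null-then-correct profiles concurrently, each on its own clock, and check only momentum increments against the sector $|h|\le H_f$. That check cannot see torque. Concurrent execution is torque-infeasible whenever the bang phases overlap, and they always overlap when both initial rate errors are nonzero, because both nulling phases start at $t=0$.

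The per-axis closed forms are built at full acceleration. Since $I_{yy}a_\theta=n_w\tau_{\max}=(n_f+n_r)\tau_{\max}$, the rear axle is saturated and the front wheels must deliver $n_f\tau_{\max}$ of pitch torque. That forces $\delta=0$ and leaves no roll torque. Yet the roll profile simultaneously demands $I_{xx}a_\phi=n_f\tau_{\max}\sin\delta_{\mathrm{roll}}^{\max}$. The combined front demand therefore has magnitude $n_f\tau_{\max}\sqrt{1+\sin^2\delta_{\mathrm{roll}}^{\max}}>n_f\tau_{\max}$.

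Your remark that the front axle ``spends $|h|$ rather than the sum'' conflates integrated momentum with instantaneous torque. Refining the partition to the union of switching times does not help, because every subinterval inside the overlapping bang phases is infeasible. The cumulative wheel-speed bookkeeping you flag as the main obstacle is not where the argument breaks.

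The missing idea is the paper's: schedule the two axes \emph{sequentially}, time-multiplexing the front axle. The only remaining joint constraint is then momentum-level frontier membership, checked per phase to account for the pitch by-product $h\cos\delta$ of the roll expenditure. The paper justifies this in time by noting that each axle exhausts its full capacity within $T_H\approx0.1$\,s (Theorem~\ref{thm:frontier}), far below the flight time $T$. Deferring one axis's torque-limited phases therefore consumes only a small part of the flight.

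If you prefer to keep concurrency, you must derate the accelerations until, for some $|\tau_r|\le n_r\tau_{\max}$, the torque-level demand $(I_{yy}u_\theta-\tau_r,\,I_{xx}u_\phi)$ lies in $\{\tau_f(\cos\delta,\sin\delta):|\tau_f|\le n_f\tau_{\max},\,|\delta|\le\delta_{\mathrm{roll}}^{\max}\}$. You would then re-evaluate the closed forms with those smaller accelerations, which is no longer the certificate the statement asserts.

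A smaller point in Step~2: moving the rate target to $\omega^\ast\in[-\bar\omega_\phi,\bar\omega_\phi]$ also shifts the residual by $-\omega^\ast T$ and changes the $e_0|e_0|/(2a_\phi)$ term. So you must recompute $r_\phi$ with the new rate target before taking its distance to $[-\bar\phi,\bar\phi]$.
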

\begin{proof}
Resolving the front budget through $\sin\delta$ gives the same bounded
double integrator as Theorem~\ref{thm:feasible}; variable substitution
proves the per-axis certificate, while Theorem~\ref{thm:frontier} supplies
the shared-budget coupling. Sequential scheduling of the two axes within $T$ is feasible in time
because each axle spends its capacity within the torque-limited time $T_H$
of Theorem~\ref{thm:frontier}, far below jump flight times; joint
feasibility additionally requires the frontier-membership condition stated
above, which accounts for the pitch by-product of the front-axle roll
expenditure.
\end{proof}

This corollary closes the formal gap so that every attitude component of
$\CL$ carries a certificate. The joint pitch--roll certified feasible set,
however, is not the product of the two per-axis sets: both axes draw on the
shared front-axle budget, so the product is only an outer approximation.
Substituting the contracted budget $B_{\mathrm{eff}}(\bar\delta)$ of
Corollary~\ref{cor:camber} into the pitch condition predicts the
contraction trend under a sustained roll allocation, but it is not by itself
a joint certificate, because a roll maneuver with net front-axle expenditure
$h$ also injects the pitch by-product $h\cos\delta$.

\begin{proposition}[Ballistic envelope and gate selection]
\label{prop:env}
Assume the landing surface is a descending terrain profile that each
parabola crosses transversally at a unique touchdown point, so the touchdown
point varies continuously with the takeoff speed. The forward ballistic
image of $\Fset$, parameterized by takeoff speed
$v\in[v_{\mathrm{lo}},v_{\mathrm{hi}}]$ and launch angle, is a family of center-of-mass
parabolas. For a fixed speed, the family over launch angles is bounded above by the
classical safety parabola
$z=z_{\mathrm{lip}}+v^2/(2g)-g(x-x_{\mathrm{lip}})^2/(2v^2)$, where $g$
is gravitational acceleration, $z_{\mathrm{lip}}$ and $x_{\mathrm{lip}}$
the lip height and along-track position, and
$[v_{\mathrm{lo}},v_{\mathrm{hi}}]$ the evaluated takeoff-speed range;
with the launch angle fixed by the ramp,
the family is indexed by $v$ alone and varying $v$ sweeps the certified
landing region. Along
each parabola the touchdown vertical rate is a continuous function of the
takeoff speed,
\begin{align}
v_z(v)&=v\sin\alpha_{\mathrm{eff}}-g\,T(v),
\label{eq:vzofv}\\
T(v)&=\frac{v\sin\alpha_{\mathrm{eff}}
+\sqrt{(v\sin\alpha_{\mathrm{eff}})^2+2g\,\Delta z(v)}}{g}\notag
\end{align}
where $\alpha_{\mathrm{eff}}$ is the effective launch angle at the lip,
which is set by the ramp and distinct from the run-up cross-slope $\gamma$,
$T(v)$ the flight time, and $\Delta z(v)\ge0$ the height drop from the lip
to the touchdown point. This drop is determined by the intersection of the
parabola with the landing surface and thus incorporates the landing geometry.
The
$\CL$ condition $|v_z|\le\bar v_z$ restricts $v$ to a closed subset of the
evaluated range. A sufficient
certificate for this speed component of $\Fset$ is obtained when the landing
surface intersects this
region at a point admissible for $\CL$.
\end{proposition}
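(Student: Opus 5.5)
The plan is to work entirely in the vertical plane of the run-up under A3, taking the lip as origin. For a fixed launch angle $\alpha$ the center-of-mass path is $z=z_{\mathrm{lip}}+(x-x_{\mathrm{lip}})\tan\alpha-g(x-x_{\mathrm{lip}})^2/(2v^2\cos^2\alpha)$. The safety-parabola claim is the classical envelope computation. Write $u=\tan\alpha$ and use $1/\cos^2\alpha=1+u^2$; the height at fixed horizontal offset $X=x-x_{\mathrm{lip}}$ is then a concave quadratic in $u$. Maximizing over $u$ gives $u^\star=v^2/(gX)$, and substituting back yields $z_{\mathrm{lip}}+v^2/(2g)-gX^2/(2v^2)$. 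This bounds every member of the family from above, with tangency at $u^\star$. With $\alpha=\alpha_{\mathrm{eff}}$ fixed by the ramp, only $v$ varies, so the family is a one-parameter family indexed by $v$.

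Next I will establish continuity of the touchdown point and of $v_z(v)$. Touchdown is the first root $t>0$ of $G(t,v)=z_{\mathrm{traj}}(t;v)-z_{\mathrm{surf}}(x_{\mathrm{traj}}(t;v))=0$. The transversality hypothesis gives $\partial_tG\neq0$ at that root, and uniqueness of the crossing makes the root well defined. The implicit function theorem then makes $T(v)$ continuous, and in fact $C^1$ if the terrain profile is $C^1$. Hence $x_{\mathrm{td}}(v)$ and $\Delta z(v)=z_{\mathrm{lip}}-z_{\mathrm{surf}}(x_{\mathrm{td}}(v))$ are continuous. Because the terrain descends, $\Delta z\ge0$. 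Solving $v\sin\alpha_{\mathrm{eff}}\,T-\tfrac12 gT^2=-\Delta z$ for the positive root gives the stated formula for $T(v)$. Differentiating $z$ in time gives $v_z=v\sin\alpha_{\mathrm{eff}}-gT(v)$. Both expressions are compositions of continuous maps, which proves the continuity in Eq.~\eqref{eq:vzofv}.

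The closedness statement then follows directly. The set $\{v\in[v_{\mathrm{lo}},v_{\mathrm{hi}}]:|v_z(v)|\le\bar v_z\}$ is the preimage of the closed interval $[-\bar v_z,\bar v_z]$ under a continuous function on a compact interval. It is therefore closed, and also compact. Varying $v$ moves $x_{\mathrm{td}}$ continuously, so the image $\{x_{\mathrm{td}}(v)\}$ over this set sweeps the certified landing region.

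The last step is the sufficient certificate. Suppose some admissible speed $v$ produces a touchdown point on the landing surface where $\theta_{\mathrm{surf}}$ and the flight time $T(v)$ make the pitch and roll coordinates certified by Theorem~\ref{thm:feasible} and Corollary~\ref{cor:rollaxis}. Then the $\bar v_z$ condition holds by construction, and that $v$ supplies exactly the ``takeoff speed consistent with $T$'' completion required in Theorem~\ref{thm:feasible}, so $x_0\in\Fset$. The main obstacle is the implicit-function step. Transversality has to be assumed uniformly across $[v_{\mathrm{lo}},v_{\mathrm{hi}}]$ so that the first crossing cannot jump, for example by grazing a crest. I will state this as part of the hypothesis rather than try to derive it. A secondary point is that $v_{\min}$ should come from clearance of the gap, i.e.\ the touchdown lying on the landing face. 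I will fold this into the admissibility condition on the touchdown point.
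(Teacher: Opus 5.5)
Your proposal is correct and follows the same route as the paper's proof. Both argue that the touchdown map and $v\mapsto v_z(v)$ are continuous under the transversality hypothesis, so the $\bar v_z$ condition picks out a closed, hence compact, preimage inside $[v_{\mathrm{lo}},v_{\mathrm{hi}}]$. You also supply details the paper only asserts: the envelope maximization over $\tan\alpha$ and the implicit-function step. For that step, pointwise transversality together with uniqueness of the crossing already suffices, so the uniform hypothesis you plan to add is unnecessary.

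The one step of the paper's proof you leave implicit is the gate-selection conclusion. Because the certified speed set is compact, though possibly empty or disconnected, it has a largest element at or below a given request whenever it contains any such point.
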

\begin{proof}
Under the transversality assumption, the ballistic map and $v\mapsto v_z(v)$
are continuous; intersecting their
closed constraints with the compact evaluated speed range makes the
certified speed projection compact. It may be empty or disconnected, so its
largest element below any given request exists whenever the set contains
a point at or below that request.
\end{proof}

The gate operates on this certified speed set: with the launch angle fixed
by the ramp geometry, it selects the largest certified speed below the
performance request and issues no-go when no such speed exists.
Because leaving $\Fset$ removes the constructive feasibility certificate,
gate design must distinguish performance from safety margin. States that also
violate Theorem~\ref{thm:budget} cannot be repaired by additional in-flight
actuator effort. This distinction defines two subsets of $\Fset$.
\begin{proposition}[Performance-optimal versus safety-maximal envelopes]
\label{prop:safety}
The \emph{performance-optimal envelope} consists of points on
$\partial\Fset$ that maximize the selected performance objective, namely
takeoff speed, jump distance, or traverse rate. It offers zero certified margin.
The \emph{safety-maximal envelope} is the eroded set
$\Fset_{\mathrm{safe}}=\Fset\ominus \mathcal{B}_\rho$, where
$\mathcal{B}_\rho$ is a takeoff-state uncertainty ball of radius $\rho$.
Its safety-optimal points maximize distance from the boundary,
$\arg\max_{x\in\Fset}\mathrm{dist}(x,\partial\Fset)$; uniqueness is not
assumed.
\end{proposition}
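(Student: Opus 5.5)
The claims to verify are that every point of the performance-optimal envelope has zero certified margin, that points of $\Fset_{\mathrm{safe}}=\Fset\ominus\mathcal{B}_\rho$ tolerate any takeoff perturbation of size at most $\rho$, and that the safety-optimal set $\arg\max_{x\in\Fset}\mathrm{dist}(x,\partial\Fset)$ is nonempty. We work in the takeoff-state space with a fixed norm. By Theorem~\ref{thm:feasible}, Corollary~\ref{cor:rollaxis} and Proposition~\ref{prop:env}, $\Fset$ is cut out by finitely many non-strict inequalities in continuous functions of $x_0$. These are $e_0$, $T_r$, $r_\theta$, $D(T_c,B_r)$ (continuous since the two branches of Eq.~\eqref{eq:dispcap} agree at $T_c=2B_r/a_\theta$), together with the continuous map $v\mapsto v_z(v)$. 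Hence $\Fset$ is closed. Intersecting with the compact evaluated speed range and bounded attitude ranges makes it compact.

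\textbf{Zero margin.} We define the certified margin of $x\in\Fset$ as $\mathrm{dist}(x,\partial\Fset)$, the largest $\rho$ with $x+\mathcal{B}_\rho\subseteq\Fset$. Points on $\partial\Fset$ have margin zero by definition, and every neighbourhood of such a point contains uncertified states. If the objective is a speed, distance or rate that is continuous and nonconstant near the optimum, a maximizer lies on $\partial\Fset$. Otherwise one could move slightly in an improving direction while staying in the interior, which would contradict optimality. The maximizer exists by compactness of $\Fset$.

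\textbf{Erosion property.} By the definition of Minkowski erosion, $x\in\Fset\ominus\mathcal{B}_\rho$ iff $x+b\in\Fset$ for all $\|b\|\le\rho$. By Theorem~\ref{thm:feasible} every such perturbed state therefore keeps a constructive null-then-correct trajectory into $\CL$. Equivalently, $\Fset_{\mathrm{safe}}=\{x\in\Fset:\mathrm{dist}(x,\Fset^c)\ge\rho\}$.

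\textbf{Existence of safety-optimal points.} The function $x\mapsto\mathrm{dist}(x,\partial\Fset)$ is 1-Lipschitz, and $\Fset$ is compact. By Weierstrass the arg max is nonempty whenever $\Fset\neq\varnothing$. It need not be unique: when $\Fset$ is symmetric, or has flat parallel faces, whole segments attain the maximum. This is why the statement does not assume uniqueness.

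The main obstacle is the closedness and compactness of $\Fset$ in the regime where Proposition~\ref{prop:env} allows a disconnected or empty speed set, together with the implicit dependence of $B_r$ on $\mathrm{sgn}(r_\theta)$. At $r_\theta=0$ the two cases of Eq.~\eqref{eq:residualbudget} switch discontinuously. However, $|r_\theta|\le D(\cdot)$ holds trivially there, since $D\ge0$. I would check closedness using one-sided limits on each branch. If $\Fset$ is empty, all three envelopes are vacuous, matching the no-go branch.
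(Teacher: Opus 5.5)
Your proposal is correct, and its decisive step is the paper's own. The paper assumes compactness of a nonempty $\Fset$, applies Weierstrass to $x\mapsto\mathrm{dist}(x,\partial\Fset)$, and cites monotone shrinkage of $\Fset\ominus\mathcal{B}_\rho$ in $\rho$. That shrinkage is immediate from your characterization $\Fset_{\mathrm{safe}}=\{x\in\Fset:\mathrm{dist}(x,\Fset^c)\ge\rho\}$.

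What you add is a justification of the compactness the paper takes for granted. You use continuity of $D$ across $T_c=2B_r/a_\theta$, and you note that the $\operatorname{sgn}(r_\theta)$ switch in $B_r$ is harmless because $D\ge0$ whenever $-B_\uparrow\le e_0\le B_\downarrow$ and $T_r\le T$. You should state the remaining assumptions this relies on: $B_\uparrow,B_\downarrow$ continuous in $v_0$ (via A5), and $\theta_{\mathrm{surf}}$ continuous along the touchdown point. The latter means a $C^1$ landing profile in addition to the transversality assumption of Proposition~\ref{prop:env}.

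Two side remarks are loose, although the statement needs neither:
\begin{enumerate}
\item An objective that is continuous and nonconstant near its optimum can still have a strict interior maximum. Forcing maximizers onto $\partial\Fset$ needs something like strict monotonicity in the free coordinate $v_0$, which holds for takeoff speed and for jump distance at fixed launch angle.
\item Symmetry alone does not break uniqueness, since a ball has a unique center. Your flat-parallel-faces example is the one that does.
\end{enumerate}
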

\begin{proof}
Compactness of a nonempty $\Fset$ makes the distance-to-boundary function
attain its maximum; standard erosion properties give the monotone shrinkage
of $\Fset_{\mathrm{safe}}$ with $\rho$.
\end{proof}
\begin{figure}[!t]
\centering
\includefig[width=\columnwidth]{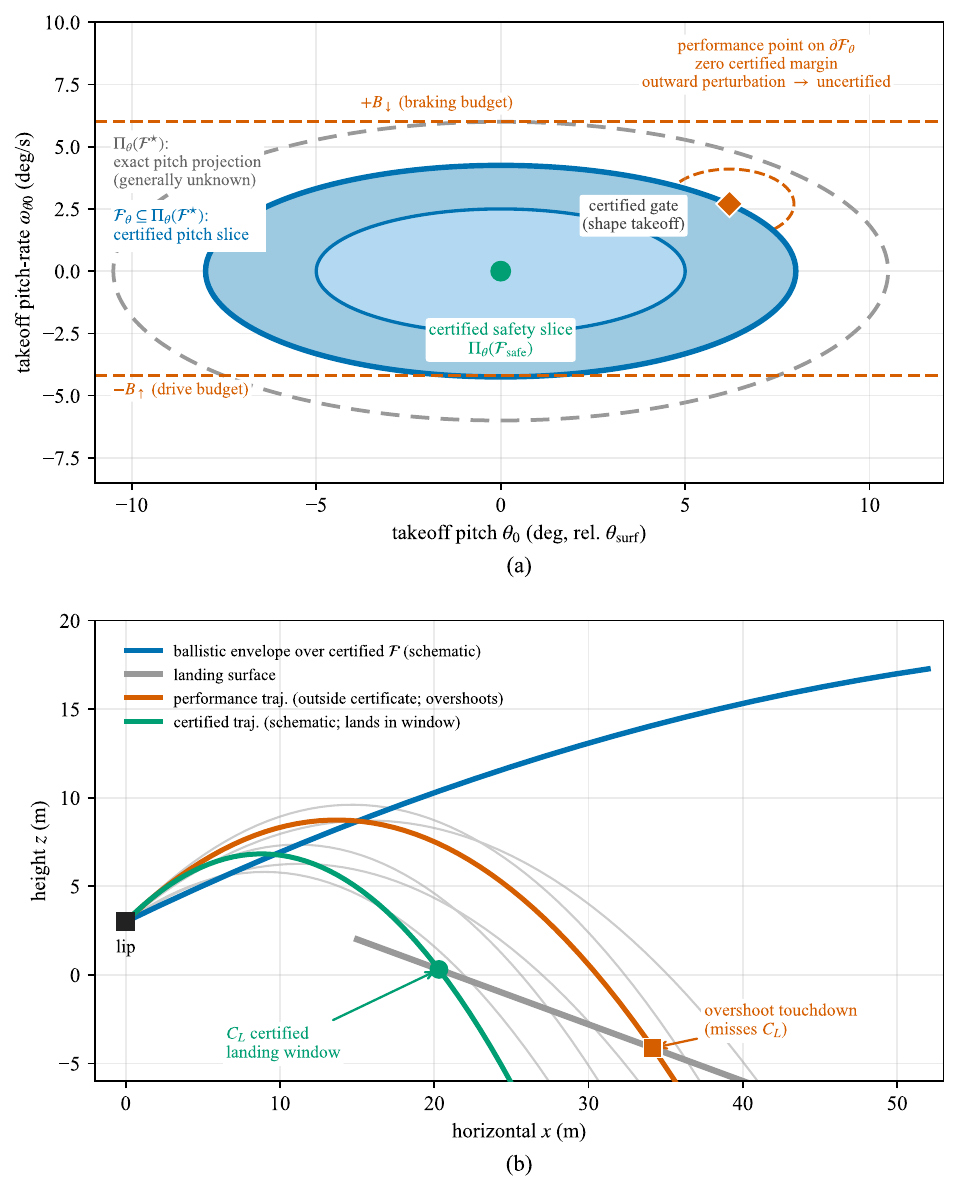}
\caption{Schematic certified takeoff envelopes and ballistic
back-propagation (not to scale). (a)~Pitch-state sets and gate selection.
(b)~Certified and overshooting ballistic trajectories.}
\label{fig:envelopes}
\end{figure}

For $\omega_{\mathrm{target}}=0$, Fig.~\ref{fig:envelopes}(a) overlays the
exact projection $\Pi_\theta(\Fset^\star)$, closed-form slice $\Fset_\theta$,
directional bounds $[-B_\uparrow,B_\downarrow]$, and robust interior
$\Pi_\theta(\Fset_{\mathrm{safe}})$. Panel (b) maps the certified set forward:
the boundary trajectory overshoots, whereas an interior trajectory reaches
$\CL$.
Performance pulls toward $\partial\Fset$ while robustness pulls inward: at a
performance-optimal point, an outward takeoff-state perturbation may exit
$\Fset$, at which point the landing certificate no longer applies. The
evaluated regulator selects the largest certified speed; erosion by a calibrated
$\mathcal B_\rho$ is its uncertainty-aware extension. If the eroded set is
empty, the vehicle must shape earlier or abort. Figure~\ref{fig:envelopes}
schematizes Propositions~\ref{prop:env} and~\ref{prop:safety}.

\begin{proposition}[Bounded saturated rate tracking]
\label{prop:ratebounded}
Let $\kappa=n_w K_d/I_{yy}$ denote the inner-loop rate-tracking bandwidth.
Under A1, the timescale separation $\kappa\gg K_\theta$, and within the
momentum budget of Theorem~\ref{thm:budget}, the saturated
rate-tracking law of Section~\ref{sec:dual} contracts its rate error outside
the standard cascade boundary layer and remains bounded over the finite flight
horizon.
\end{proposition}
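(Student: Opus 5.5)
I would treat the closed loop as a standard two-timescale cascade on the reduced double integrator of A1. With $q=\omega_\theta$ and $\dot q=u$, the inner loop applies the common per-wheel torque $K_d e_\omega$ on $n_w$ wheels. Before per-wheel saturation this gives $u=n_wK_d e_\omega/I_{yy}=\kappa e_\omega$, where $e_\omega=\omega_\theta^{\mathrm{des}}-\omega_\theta$. The outer loop is the clipped reference $\omega_\theta^{\mathrm{des}}=\mathrm{clip}(K_\theta(\theta_{\mathrm{target}}-\theta),\pm\omega_{\theta,\max})$. I would first write the error dynamics $\dot e_\omega=\dot\omega_\theta^{\mathrm{des}}-\kappa e_\omega$ in the unsaturated-torque regime. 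I would then note that the clip function is globally Lipschitz with constant $K_\theta$ and is locally constant in saturation. Hence $|\dot\omega_\theta^{\mathrm{des}}|\le K_\theta|\omega_\theta|$ almost everywhere.

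The key step is a Lyapunov bound on $V=e_\omega^2/2$. I would show
\[
\dot V\le-\kappa e_\omega^2+K_\theta|\omega_\theta||e_\omega|\le-\kappa e_\omega^2+K_\theta\big(\omega_{\theta,\max}+|e_\omega|\big)|e_\omega|,
\]
using $|\omega_\theta|\le|\omega_\theta^{\mathrm{des}}|+|e_\omega|\le\omega_{\theta,\max}+|e_\omega|$. Under $\kappa\gg K_\theta$, so that $\kappa-K_\theta>0$, this gives $\dot V<0$ whenever $|e_\omega|>\varepsilon:=K_\theta\omega_{\theta,\max}/(\kappa-K_\theta)=O(K_\theta/\kappa)$. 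That set is the \emph{standard cascade boundary layer}. Outside it $|e_\omega|$ decreases, and once inside it stays inside. This is the contraction claim.

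Next I would handle saturation. If the per-wheel torque clip $|\tau_i|\le\tau_{\max}$ is active, then $u=a_\theta\operatorname{sgn}(e_\omega)$, which still has the sign that shrinks $|e_\omega|$ provided $a_\theta>|\dot\omega_\theta^{\mathrm{des}}|$. The momentum-budget hypothesis enters here. It ensures no wheel reaches its A4 speed bound during the flight, so the reaction torque is not cut off. Otherwise $u$ would collapse to zero and $e_\omega$ could drift. I would state this explicitly: the requested rate excursion is at most $|\omega_{\theta0}|+\omega_{\theta,\max}+\varepsilon$, and it must lie within the directional budget of Theorem~\ref{thm:budget}. Given that, the momentum constraint is inactive along the trajectory and the sign argument applies.

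Boundedness over the finite horizon then follows. $|e_\omega(t)|\le\max\{|e_\omega(0)|,\varepsilon\}$, $|\omega_\theta|\le\omega_{\theta,\max}+\max\{|e_\omega(0)|,\varepsilon\}$, and $\theta$ grows at most linearly over $[0,T]$, so it is bounded. I expect the main obstacle to be the saturated-torque regime combined with the nonsmooth clip. The Lipschitz and almost-everywhere derivative argument needs care at the clip corners. Even more delicate is making the budget hypothesis precise enough that momentum saturation never occurs. My first approach would be to state it as an assumption on the takeoff state, namely membership in $\Fset_\theta$ of Theorem~\ref{thm:feasible} with margin $\varepsilon$, rather than try to derive it.
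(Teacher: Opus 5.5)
Your proposal is correct and follows the paper's own route. Both use the Lyapunov function $V=\tfrac12 e_\omega^2$, analyze the same regimes (saturated reference, saturated torque, and the unsaturated cascade with the cross term $K_\theta\omega_\theta e_\omega$ dominated outside a boundary layer), and close with finite-horizon boundedness.

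Your details are tighter than the paper's in two places. First, the bound $|\omega_\theta|\le\omega_{\theta,\max}+|e_\omega|$ turns the paper's state-dependent layer $O((K_\theta/\kappa)|\omega_\theta|)$ into your fixed $\varepsilon$. Second, your proviso $a_\theta>|\dot\omega_\theta^{\mathrm{des}}|$ is genuinely needed, because the paper's $\dot V=-a_\theta|e_\omega|$ under torque saturation holds only when the reference is also saturated.

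Keep your explicit excursion condition rather than switching to $\Fset_\theta$ membership. $\Fset_\theta$ certifies the null-then-correct policy, and its rate condition does not bound the excursions up to $\omega_{\theta,\max}$ that this tracking law can request.
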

\begin{proof}
Take $V=\tfrac12 e_\omega^2$ with
$e_\omega=\omega_\theta^{\mathrm{des}}-\omega_\theta$. While the rate
reference is saturated, $\omega_\theta^{\mathrm{des}}$ is constant, so
$\dot e_\omega=-\kappa e_\omega$ and $\dot V=-\kappa e_\omega^2$; where the
torque bound binds instead, $\dot\omega_\theta=a_\theta\operatorname{sgn}(e_\omega)$
and $\dot V=-a_\theta|e_\omega|\le0$. In the unsaturated region the extra term
$-K_\theta\omega_\theta e_\omega$ is dominated outside a boundary layer of
size $O((K_\theta/\kappa)|\omega_\theta|)$. Finite acceleration and rate
bounds then imply finite-horizon boundedness.
\end{proof}

This closed-loop property does not enlarge the certified pitch set
$\Fset_\theta$ of Theorem~\ref{thm:feasible}.

\begin{corollary}[Cross-phase back-propagation]
\label{cor:backprop}
Projecting $\Fset$ onto the speed axis for the current geometry gives a compact
certified speed set. When this set contains a point at or below the
performance request, its largest such element $v_{\mathrm{crit}}$ exists, and
propagating this boundary through the ground run-up
yields the admissible speed envelope of Eq.~\eqref{eq:vmaxint}.
\end{corollary}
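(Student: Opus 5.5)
The plan is to assemble the corollary from Proposition~\ref{prop:env}, Theorem~\ref{thm:feasible}, and a standard energy argument on the run-up.

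\textbf{Compactness of the speed projection.} Fix the current geometry, so the launch angle $\alpha_{\mathrm{eff}}$ is set by the ramp. Then $\Pi_v(\Fset)$ is the set of speeds $v\in[v_{\mathrm{lo}},v_{\mathrm{hi}}]$ that satisfy three conditions at once:
\begin{itemize}
\item the ballistic condition $|v_z(v)|\le\bar v_z$ of Proposition~\ref{prop:env};
\item the pitch certificate of Theorem~\ref{thm:feasible}, with $T=T(v)$ and $\theta_{\mathrm{surf}}(v)$;
\item the roll certificate of Corollary~\ref{cor:rollaxis}.
\end{itemize}

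Under A5 the budgets $B_\uparrow,B_\downarrow$ are affine in $v$ by Corollary~\ref{cor:asym}. The transversality assumption makes $T(v)$ and $\theta_{\mathrm{surf}}(v)$ continuous. The displacement cap $D(T_c,B_r)$ in Eq.~\eqref{eq:dispcap} is continuous: the two branches agree at $T_c=2B_r/a_\theta$, since $a_\theta T_c^2/4=B_r^2/a_\theta$ there.

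So every membership condition has the form $g(v)\le 0$ with $g$ continuous. Each preimage is closed, and intersecting with the compact interval $[v_{\mathrm{lo}},v_{\mathrm{hi}}]$ gives a compact set.

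\textbf{Main obstacle.} The one discontinuity is the direction switch in $B_r(e_0,r_\theta)$ at $r_\theta=0$. I would handle it with a sufficient condition: at $r_\theta=0$ the displacement condition $|r_\theta|\le D$ holds trivially because $D\ge0$. Alternatively, I would replace the condition by the closed union of the two branch conditions, $\{r_\theta\ge0,\ r_\theta\le D(T_c,B_\uparrow+e_0)\}\cup\{r_\theta\le0,\ -r_\theta\le D(T_c,B_\downarrow-e_0)\}$. This union equals the original set and is closed, since it is a finite union of closed sets.

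\textbf{Existence of $v_{\mathrm{crit}}$.} Given a request $v_{\mathrm{req}}$, intersect the compact set with $[v_{\mathrm{lo}},v_{\mathrm{req}}]$. The result is compact, and it is nonempty by hypothesis, so its supremum is attained. That maximum is $v_{\mathrm{crit}}$.

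\textbf{Back-propagation along the run-up.} Take the longitudinal point-mass equation $v\,dv/ds=-a_{\mathrm{brake}}(s)$, with $s$ the distance remaining to the lip. Integrating it under maximal deceleration from distance $d$ to the lip gives $v(0)^2=v(d)^2-2\int_0^d a_{\mathrm{brake}}\,ds$.

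Requiring $v(0)\le v_{\mathrm{crit}}$ yields exactly $v(d)\le v_{\max}(d)$ from Eq.~\eqref{eq:vmaxint}. Conversely, any $v(d)\le v_{\max}(d)$ can be decelerated, or tracked by throttle, to a lip speed at or below $v_{\mathrm{crit}}$.

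That lip speed lies in the certified set only if it equals $v_{\mathrm{crit}}$ or another certified point, because the set may be disconnected. I would therefore state the envelope as the maximal admissible state, with braking to land exactly on $v_{\mathrm{crit}}$. This is the regulator of Section~\ref{sec:shaping}.
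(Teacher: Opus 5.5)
Your proposal is correct and follows the paper's route: compactness of the speed set from continuous constraints on the compact evaluated range as in Proposition~\ref{prop:env}, attainment of the largest certified speed at or below the request, and the work--energy relation for Eq.~\eqref{eq:vmaxint}. It is more thorough than the paper, which invokes only the ballistic constraints, because you also check continuity of the pitch and roll certificates and close the $B_r$ switch at $r_\theta=0$ as a union of two closed sets. One sign slip: with $s$ the distance remaining to the lip, the run-up equation reads $v\,dv/ds=+a_{\mathrm{brake}}(s)$, though your integrated relation $v(0)^2=v(d)^2-2\int_0^d a_{\mathrm{brake}}(s)\,ds$ is already the correct one.
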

\begin{proof}
The projection defines $v_{\mathrm{crit}}$. With $a_{\mathrm{brake}}(s)>0$
the total decelerating specific force of Section~\ref{sec:shaping}, the
work--energy relation over the remaining distance $d$ gives
Eq.~\eqref{eq:vmaxint}: below the envelope, the vehicle can shed the excess
speed to $v_{\mathrm{crit}}$ within the remaining run-up.
\end{proof}

When the certified speed set contains no point at or below the request, the
gate issues no-go. This corollary is the constructive response to the
impossibility result of Theorem~\ref{thm:budget}: the takeoff state is shaped
into $\Fset$ before launch, where in-flight correction is certified.

\section{Experimental Setup}
\label{sec:setup}

We run all vehicle-dynamics experiments in BeamNG.tech version 0.38.3.0~\cite{beamngtech},
a soft-body simulator previously used in off-road navigation
studies~\cite{zhao2026jump}. Our simulated full-scale platform is
a custom 4WIDS electric vehicle whose curb mass is \mbox{$\approx$1383\,kg}, read back via
BeamNG \texttt{calcBeamStats}; the approach-mode mass ladder of
Table~\ref{tab:scenarios} spans $\pm20\%$ about this value.
Figure~\ref{fig:vehframe} shows the vehicle, rendered in the red livery used
for DART in the multi-car runs, together with its body frame and the measured
inertial and drivetrain parameters used throughout the analysis. The body
frame is $x$ forward, $y$ left, and $z$ up; nose-up pitch $\theta$ about
$-y$ is positive;
the overlay is computed from BeamNG node masses and drivetrain limits.

We evaluate in two complementary modes. Air-impulse experiments initialize
the vehicle at the lip with prescribed takeoff states, isolating the
airborne phase (P1). Approach experiments instead generate the takeoff
state through the deployment-relevant ground run-up; one of them evaluates
the complete cross-phase chain, including reachability gating, airborne
control, and landing, end to end on a single vehicle and geometry.
Direction-resolved budget tests separately check the analytic
feasible-set criterion. Our primary metrics are absolute landing pitch and
roll error, safe-landing rate, and touchdown speed, defined as the total
center-of-mass speed at first ground contact. Touchdown speed is dominated
by the horizontal component; in the steep-lip scenario, for example, the
median touchdown speed is ${\approx}19$\,m/s while the median vertical
component is ${\approx}12$\,m/s. We report three nested landing outcomes.
An \emph{on-target} landing reaches the simulator's instrumented landing
surface without tumbling. A \emph{safe} landing is on-target with
$|$pitch error$|<35^\circ$ and $|$roll$|<30^\circ$; these are the
crash-avoidance bounds used throughout Section~\ref{sec:results}, within
which the suspension can absorb the impact. A \emph{clean} landing is
on-target with $|$pitch error$|\le5^\circ$. Confirmatory comparisons use
$N\ge30$ trials per configuration and report medians; the figures show
dispersion. Each table states whether the protocol is same-tick paired or
single-vehicle interleaved and gives the corresponding significance test.

Table~\ref{tab:scenarios} reports launch mode before the semicolon in its
Mode/protocol column and comparison protocol after it. Air-impulse injects
the takeoff state at the lip, whereas approach generates it through the
run-up. Interleaved tests rotate laws on one vehicle and include a same-law
replicate; same-tick tests launch the compared laws simultaneously.
Confirmatory experiments target $N=30$ per setting, geometric generalization
yields 27--30 valid paired trials, and ``r'' marks same-state physics repeats.
The resonance entry denotes the $\sigma=0.25$, 20-ms condition of
Section~\ref{sec:res-jitterlat}.

\begin{figure}[!t]
\centering
\includefig[width=\columnwidth]{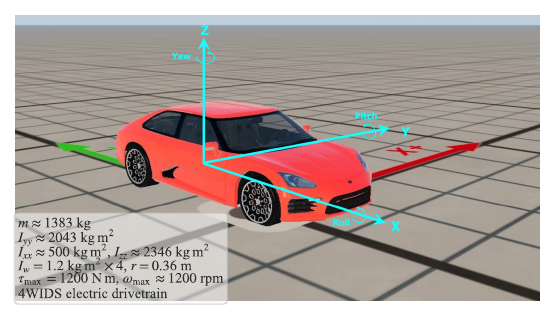}
\caption{Experimental 4WIDS platform, body/inertial frames, and measured
parameters used by the authority analysis.}
\label{fig:vehframe}
\end{figure}

\begin{table*}[!t]
\centering
\caption{Experimental design matrix.}
\label{tab:scenarios}
\scriptsize
\setlength{\tabcolsep}{2pt}
\begin{tabular}{@{}>{\raggedright\arraybackslash}p{0.16\textwidth}>{\raggedright\arraybackslash}p{0.21\textwidth}>{\raggedright\arraybackslash}p{0.30\textwidth}>{\raggedright\arraybackslash}p{0.18\textwidth}>{\raggedright\arraybackslash}p{0.11\textwidth}@{}}
\toprule
Experiment & Mode / protocol & Geometry / disturbance & Tests & $N$ \\
\midrule
\multicolumn{5}{@{}l}{\textit{Tier 1: theory validation; DART only; exact-state injection}} \\
Pitch-authority envelope & air-impulse; DART only & $\theta_0{=}+10^\circ$--$+45^\circ$, 7 levels & Thm.~\ref{thm:budget} & 5r/level \\
Directional budget probes & air-impulse; DART only & nose-up task at $v_0{=}16/20$\,m/s; nose-down ladder & Thm.~\ref{thm:budget}, Cor.~\ref{cor:asym} & 12r/speed + ladder \\
Certified-boundary grid & air-impulse, torque-free; DART only & 25 cells: 15 boundary + 10 interior; $\theta_0\times e_0$ & Thm.~\ref{thm:feasible} & 5r/cell; 125 total \\
Gap-clearance calibration & air-impulse; same-state ladder & launch $\{8,12\}^\circ$, gaps $\{33,54\}$\,m, seven speeds & Prop.~\ref{prop:env} & 2r/speed \\
No-go demonstration & approach; single vehicle & $\alpha{=}20^\circ$, empty 54-m window, $v_{\mathrm{req}}{=}22$\,m/s & Cor.~\ref{cor:backprop} & 3 \\
\midrule
\multicolumn{5}{@{}l}{\textit{Tier 2: decisive comparisons; DART vs.\ TOBB / RW-PD}} \\
Air-impulse head-to-head & air-impulse, Cond.~1; interleaved & $\alpha{\in}\{20,24,28\}^\circ$ & Regime B & 30/law/$\alpha$ \\
Approach head-to-head & approach; same-tick & $20^\circ$ tabletop steep lip, Regime A; $24^\circ$ curved lip, Regime D & Regimes A / D & 30/law/cell \\
Banked run-up sweep & approach; same-tick, ramp copies & $\alpha{=}20^\circ$, $\gamma{\in}\{4,8,12\}^\circ$, 50\,m run-up & Cor.~\ref{cor:camber}, Regime C & 30/law \\
End-to-end gate & approach; single vehicle & $\alpha{=}20^\circ$ steep lip, gate off/on, $v_{\mathrm{req}}\sim\mathcal{U}(21,23)$ & Cor.~\ref{cor:backprop} & 30/config \\
Banked gate-off replicate & approach; same-tick, ramp copies & $\gamma{=}8^\circ$, reachability gate off & gate / airborne attribution & 30/law \\
Per-axis ablation & air-impulse, Cond.~1; interleaved & $\alpha{=}24^\circ$, pitch-only / dual / roll-only & allocation test with null result & 30/variant \\
Per-flight latch ablation & mixed; same-tick & flat approach / $28^\circ$ air / resonance / $1.2\times$ mass & Thm.~\ref{thm:frontier}, Eq.~\eqref{eq:wroll} & 30/config \\
\midrule
\multicolumn{5}{@{}l}{\textit{Tier 3: generalization axes; one axis varied; reference configuration otherwise}} \\
Geometric generalization & 4 approach + 3 air-impulse; same-tick & angle / rise axes; see Table~\ref{tab:geomgen} & severity / airtime laws & 27--30/law/geom. \\
Gain sweep & air-impulse, Cond.~1; same-tick & $K_\theta,K_\phi,K_d$ at $0.5\times$--$2\times$ & DART gain robustness & 30/setting \\
Platform sweep & air-impulse, Cond.~1; same-tick & torque, tire pressure, CoM, $I_{yy}$ & platform robustness & 30/setting \\
Run-up surface & approach; same-tick & asphalt / gravel / dirt & traction robustness & 30/law/surface \\
Mass ladder & approach; same-tick & $0.8\times/1.0\times/1.2\times$ scaled $m/k/c$ & budget $\propto$ inertia ratio & 30/law/mass \\
Takeoff-state jitter & air-impulse, Cond.~1; same-tick & $\sigma{\in}\{0.25,1,2,3,5\}$; latency fixed at 20\,ms & pitch robustness & 30/law/$\sigma$ \\
Actuator latency & air-impulse, Cond.~1; interleaved & latency $\{0,10,20,40\}$\,ms; $\sigma{=}0.25$ fixed & yaw-resonance exposure & 30/law/lat. \\
\bottomrule
\end{tabular}
\end{table*}

\subsection{Platform and Airborne Protocol}
Figure~\ref{fig:scenario-library} visualizes the geometry axes summarized in
Table~\ref{tab:scenarios}. The jump library combines
curved-lip \emph{kicker} ramps (angle $\alpha$ and
rise) with valley, gap, or flat landing surfaces. \emph{Air-impulse} mode
prescribes $(\theta_0,\omega_{\theta0},v_0,\phi_0)$ at the lip to isolate the
airborne law; \emph{approach} mode generates the takeoff state through the
run-up, climb, and release to evaluate the deployed cross-phase chain. The mass ladder
uses approach mode and consistent $m/k/c$ scaling so takeoff speed, modal
frequencies, and damping remain comparable across masses.
In Fig.~\ref{fig:scenario-library}, row 1 contains the $\alpha=20^\circ$
anchor, the $\alpha=24^\circ/28^\circ$ variants, and the tabletop steep-lip
construction. Row 2 contains the $\gamma=8^\circ$ banked run-up, the
$\beta=-24^\circ$ landing, and the $\alpha=10^\circ/14^\circ$ variants;
row 3 contains $\alpha=22^\circ$, rise 4/13\,m at $\alpha=14^\circ$, and
the gap landing.

All controllers share the same per-wheel torque/steer interface. DART
denotes the cross-phase framework with its per-flight roll latch enabled;
always-on ablations and end-to-end evaluations are named explicitly. The
latch uses $\phi_{\mathrm{on}}=8^\circ$ and
$\phi_{\mathrm{db}}=2^\circ$. At 100\,Hz, \proc{TakeoffConfirmed}
requires $n_{\mathrm{ph}}=3$ no-contact samples over 30\,ms, and
\proc{EngageOK} requires $n^{\ast}=6$ samples over 60\,ms with wheel
speed below the spin-up guard $\bar\omega_w=200$\,rad/s, just under the
drivetrain hard limit; the in-flight wheel-speed bounds are
$[-1200,1200]$\,rpm. The land-match height is
$z_{\mathrm{lm}}=2.5$\,m in the reference configuration and $1.5$\,m in the
gain and platform ladders, recorded per experiment in the released
configurations; wheel-speed re-ordering below $z_{\mathrm{lm}}$ applies
mild pitch-rate damping. The heading trim uses a $2^\circ$ yaw deadband
with gain $0.55$ and steer cap $0.45$, and attenuates roll steer beyond
$8^\circ$ yaw error. At first contact the controller stops pitch
differential actuation and holds roll steer, as
Algorithm~\ref{alg:dart} specifies.

We compare against time-optimal bang-bang (TOBB) and an error-driven
reaction-wheel-style PD law (RW-PD), following AGRO's aerial PD concept
\cite{gonzalez2020agro} but implemented through the throttle/brake
interface. We tune both baselines on a separate $\alpha=18^\circ$
air-impulse band with $N=30$ per setting, spanning TOBB acceleration over
$\{60,120,240,480\}^\circ$/s$^2$ and RW-PD proportional gain over
$\{0.3,0.6,1.2,2.4\}$.

We measure pitch error from the terrain-matched target. Safe-land requires
takeoff, no tumble, and touchdown within $35^\circ$ pitch and $30^\circ$
roll; the analytic certificate instead targets the exact terminal pair of
Theorem~\ref{thm:feasible}.

The end-to-end experiment uses a calibrated one\hyb dimensional regulator that
approximates $\theta_0$ by the ramp angle, zeroes the predicted rate, and
enforces $v_{\max}(d)$. Its settings are $v_{\mathrm{crit}}=11$\,m/s,
adaptive $a_{\mathrm{brake}}=4$\,m/s$^2$, 1.5\,m coast, $T=0.95$\,s and
$a_\theta=120^\circ$/s$^2$. An empty-window run triggers the no-go branch;
online full-state gating is future work.

Our reference comparison protocol, the \emph{single-vehicle interleaved}
design, rotates laws at one physical
location and includes a bit-identical DART replicate; we judge Mann--Whitney
tests against this same-law noise floor. In conditions where every law completes
and paired trajectories remain nonchaotic, we switch to the faster
\emph{same-tick three-law launch}; banked conditions use translated copies
of the same ramp to preserve identical geometry. Stressed conditions that
violate this criterion revert to the interleaved protocol.
Table~\ref{tab:scenarios} records the choice per experiment.

Condition~1 combines an approximately $24^\circ$ takeoff with
$28$--$36^\circ$ roll toward a near-level target; Condition~2 combines a
${\approx}-24^\circ$ landing target with $12$--$36^\circ$ roll. The
$\alpha=20^\circ$ experiments span three run-up families: \emph{flat} is the
unbanked valley, \emph{steep-lip} is the sustained nose-down kicker, and
\emph{banked} adds cross-slopes of $\gamma\in\{4,8,12\}^\circ$.
The experiments use two lip constructions, distinguished here because
the lip determines the takeoff disturbance: the library ramps end in a
curved lip with a 35\,m arc that unloads the axles gradually, whereas the
steep-lip scenarios use a tabletop profile whose support ends abruptly at
the lip edge; the abrupt front-axle unloading produces their large
nose-down takeoff pitch rates. Steep-lip scenarios land on a level surface
($\beta=0$); all library-ramp scenarios share the unified
$\beta=-12^\circ$ valley landing.

\begin{figure*}[!t]
\centering
\includefig[width=\textwidth]{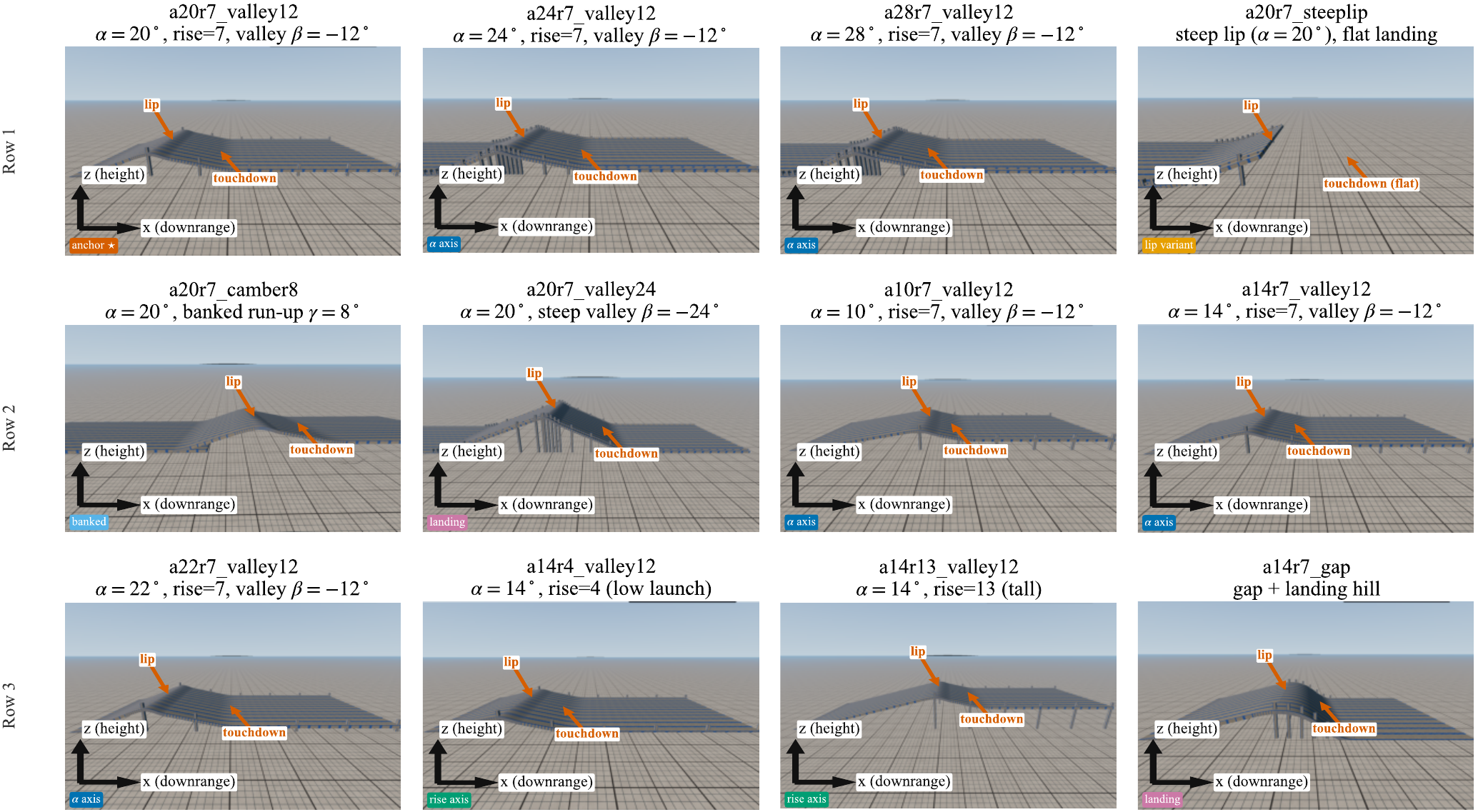}
\caption{Rendered DART jump library of twelve geometries. Orange markers denote
the takeoff lip and touchdown.}
\label{fig:scenario-library}
\end{figure*}

\subsection{Reproducibility}
We run all experiments with controlled seeds, deterministic 0.01\,s physics, and
100\,Hz control updates; the released code records every controller gain and
the seed of each experiment, so all runs can be reproduced exactly.
Confirmatory and robustness experiments use $N=30$ trials per setting, plus
a 30-jump DART replicate in interleaved protocols. The authority ladder and
certified-boundary grid instead use five same-state physics repeats per
injected state, not independent samples.

For same-tick comparisons we use two-sided Wilcoxon tests for continuous
errors and McNemar discordances for safe landing; for independent groups we
use two-sided Mann--Whitney tests. We report exact $p$-values and null
boundary results.
Source code and instructions to reproduce this work, as well as
video demos, will be available at
\url{https://github.com/MeridianCAS/DART}.

\section{Results}
\label{sec:results}

We organize the results around the claims that Table~\ref{tab:scenarios}
assigns to each experiment. The following subsections first test the mechanism
and certified envelopes, then compare DART with the baselines, vary the
generalization axes, and finally synthesize the operating regimes and
boundaries. Statistics from different launch protocols remain separate
unless stated otherwise.

\subsection{Theory Validation}
\label{sec:res-theory}
This tier tests the mechanism, angular-momentum authority, certified-state
boundary, and the empty-speed-window no-go branch before comparing
closed-loop performance.

\subsubsection{Wheel-Reaction Mechanism and Authority}
We first confirm that airborne roll authority scales as $\sin\delta$.
On a high-altitude flat platform in an open-loop test with driven wheels, sweeping
front steer \(\delta\) induces a body roll rate of \(28.2\sin\delta\)\,\(^\circ\)/s with
\(R^2=0.987\). Pure left/right torque differential at zero steer yields roll
rates below \(0.12^\circ\)/s, roughly \(110\times\) below the steer-induced
response at the largest tested steer. At \(\delta=30^\circ\), that response
is \(13.4^\circ\)/s. Airborne roll is therefore steer-induced drive reaction,
not differential thrust, consistent with the bicycle-model
\(\sin\delta\) scaling~\cite{pokhrel2025dom} at full-vehicle scale.

\begin{figure}[!t]
\centering
\includefig[width=\columnwidth]{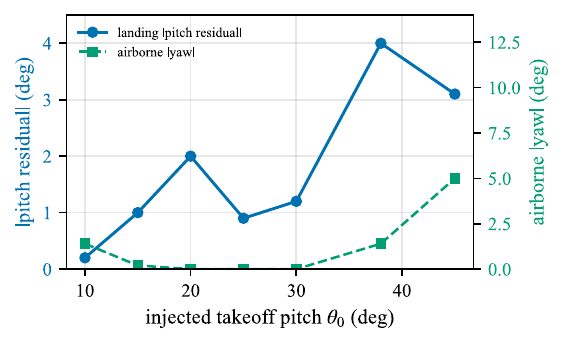}
\caption{Pitch-authority ladder for pure attitude injections from
$\theta_0=+10^\circ$ to $+45^\circ$. Each level uses five same-state repeats
with the roll latch enabled. Left: median landing-pitch residual. Right: median
airborne yaw drift measured from forward-direction azimuth.}
\label{fig:authenv}
\end{figure}

Sweeping pure pitch-attitude injections from $+10^\circ$ to $+45^\circ$
against the $-12^\circ$ landing target creates correction demands of
$22$--$57^\circ$. Each level uses $N=5$ repeats. DART with its per-flight roll latch holds the landing
residual at $0.2$--$4.0^\circ$
across the entire ladder with every jump on the landing surface, as
Fig.~\ref{fig:authenv} shows. Airborne yaw stays below $2^\circ$ through the
$+38^\circ$ level and reaches $5^\circ$ at $+45^\circ$,
still an order of magnitude below the yaw-resonance mode. With no roll
injection, the roll channel stays largely dormant and the yaw projection of
Eq.~\eqref{eq:yawleak} receives no torque activity, providing an independent
control point for the mechanism. The
large correction magnitudes are also consistent with the direction-dependent
budgets of Theorem~\ref{thm:budget} because they are nose-down corrections
executed by regenerative braking and reverse
drive, whose budget $B_\downarrow$ substantially exceeds the nose-up
$B_\uparrow$; both directions are
quantified by the budget-consistency tests reported next. This open-loop
attitude-injection ladder measures airborne authority, not gate certification or
membership in $\Fset$.

\subsubsection{Budget-Consistency Tests}
\label{sec:res-reachf1}
Two direction-resolved tests examine the angular-momentum budget where it can
be measured cleanly. In the nose-up direction under drive, the budget is
$B_\uparrow=\sum_iI_i(\omega_{\max}-\omega_{i,0})/I_{yy}$ and shrinks with
takeoff speed through $\omega_{i,0}=v_0/r$. A fixed correction task with a
$+13^\circ$ nose-up demand transitions from full correction at
$v_0=16$\,m/s to the onset of residual error at $v_0=20$\,m/s. The former
has a $1.4^\circ$ residual and $B_\uparrow\approx11^\circ$/s against an
approximately $9^\circ$/s demand; the latter has a $3.9^\circ$ residual and
$B_\uparrow\approx9.5^\circ$/s. The budget
begins to bind where the closed-form formula predicts. In the
nose-down direction, the attitude-injection ladder of the authority
envelope corrects up to $57^\circ$ over a $2.8$\,s flight, a
${\ge}20^\circ$/s average recoverable rate. This is consistent with the
reverse-inclusive braking budget of Theorem~\ref{thm:budget} at the
ladder's launch condition: the impulse launch holds drive torque through
release, so the wheels depart spinning well above rolling speed at
$\omega_{i,0}\approx165$--$190$\,rad/s, the calibrated range for this launch
mode. The resulting $B_\downarrow\approx39$--$43^\circ$/s covers the demanded
average rate with nearly a twofold margin. The certified-boundary grid below
instead uses a torque-free launch variant that leaves the wheels near rest.
We do not partition all jumps by the lip-measured
rate because that signal conflates ground-coupled transients with sustained
free-flight rotation; the steep-lip experiments, where the sustained rate is
well defined, show the budget binding as predicted.

A hard-limit calibration on the free-fall bench measures the headroom above
the sustained envelope. Holding full drive torque through flight saturates
the wheels at $221$\,rad/s, the 2105-rpm drivetrain limit, within
$0.3$\,s of command onset and keeps them there to touchdown; the
accompanying nose-up body-rate swing measures $21.7^\circ$/s from wheels at
rest and $16$--$18^\circ$/s from typical takeoff wheel speeds of
$43$--$60$\,rad/s. This is the hard-limit counterpart of the $9$--$13^\circ$/s
sustained-envelope budget of Section~\ref{sec:budget}. The steep-lip entry
rates in Section~\ref{sec:res-e2e}, whose medians span $21$--$43^\circ$/s,
exceed the
recovery budget even at the hard limit.

\subsubsection{Direct Test of the Certified-Set Boundary}
\label{sec:res-certprobe}
We test the certificate of Theorem~\ref{thm:feasible} directly by sweeping
the takeoff state over a grid that brackets the predicted boundary. The fixed
geometry uses the air-impulse protocol at $\alpha=20^\circ$ with a
$-12^\circ$ landing target. Five launch-attitude levels produce
$\theta_0=2$--$44^\circ$. Six angular-velocity impulse levels applied at
release produce sustained rates $e_0=-39$ to $+31^\circ$/s, measured by a
linear fit over the first $0.3$\,s of flight. Flight time varies with the
injected state and enters the certificate per jump. On completed flights,
$T=1.3$--$3.3$\,s. The
launch applies no drive torque, so the wheels leave the platform near rest
and
the wheel-speed terms of the budget are known measured quantities rather
than assumption A5. The 25 nominal cells comprise a 15-cell grid
bracketing the predicted boundary plus a 10-cell densification of its
certified interior; each cell is repeated five times under deterministic stepping,
giving $N{=}125$ jumps over 25 takeoff states.
Membership is evaluated per jump from its \emph{measured} takeoff state, so
replicate-to-replicate scatter can place the five jumps of a borderline cell
on both sides of the certificate boundary. A landing is scored clean when it
is on-target with $|$pitch error$|\le5^\circ$, as defined in
Section~\ref{sec:setup}.

The certificate
accepts 62 of the 125 jumps, and 61 land clean. The single certified miss
is a replicate whose flight ran $0.3$\,s longer than its cell siblings:
the correction, timed for the typical flight, carries the recovery swing
$12^\circ$ past the target by the late touchdown, still on the landing
surface and without tumbling. Both failing cells (the largest nose-up rate
injections, $e_0\approx+19$ and $+31^\circ$/s) lie outside the certificate.
Outside the certificate, 52 of 63 jumps still land clean, demonstrating that
the null-then-correct construction is conservative because the certificate demands
the exact terminal pair, whereas the outcome tolerance admits a $\pm5^\circ$
window. This degree of conservatism is consistent with the $89$--$96\%$
displacement-interval coverage of the numerical cross-validation in
Section~\ref{sec:theory}.

Two auxiliary measurements fix the certificate's parameters for this test
geometry.
A free-flight momentum-exchange probe prescribes wheel-speed swings of
${\approx}220$\,rad/s and deliberately approaches the 2105-rpm drivetrain
hard limit rather than the 1200-rpm sustained in-flight envelope of
Section~\ref{sec:budget}. The measured body-rate response yields an effective
per-corner spin inertia of $0.83$--$0.98$\,kg\,m$^2$, the wheel-side inertia
that carries the exchange in the simulator.
A wheels-at-rest glide over the same arc bounds aerodynamic and numerical
non-wheel pitch moments at ${\approx}1.5^\circ$/s$^2$, an order
of magnitude below the injected disturbances.

\subsubsection{Ballistic Speed Window and No-Go Branch}
\label{sec:res-abort}
We construct an empty-window gap jump requiring both clearance and landing
on an $8^\circ$ catch ramp. Because the point-mass model underpredicts
the measured range by ${\approx}39\%$ at 28\,m/s, the gap length is
calibrated per launch geometry. Fixed-state ladders yield 0/14 clean
landings for a 33\,m gap on the $8^\circ$ launch and 0/14 for a 54\,m gap
on the longer-range $12^\circ$ launch. The resulting
$v_{\min}\approx36.1$\,m/s exceeds
$v_{\mathrm{crit}}=11$\,m/s, so the certified speed set is empty.

With a 22\,m/s request, all three approach trials issue a no-go and stop
$5.03$--$5.04$\,m before the lip, meeting the 5\,m target. Together with the
nonempty-window gate experiment, these trials demonstrate both gate outcomes
by
selecting speed from a nonempty certified projection and refusing when that
projection is empty.

\subsection{Decisive Comparisons}
\label{sec:res-decisive}
This tier compares DART with TOBB and RW-PD under controlled airborne
disturbances and deployment-relevant approach chains, then isolates the
contributions of the gate and airborne control law.

\subsubsection{Air-Impulse Head-to-Head}
\label{sec:res-air}
Stressed conditions use the single-vehicle interleaved protocol and
same-law noise floors; the remaining conditions use the same-tick three-law
launch described in Section~\ref{sec:setup}.

Table~\ref{tab:headtohead} compares the interleaved Condition-1 experiments
using $N=30$ per law and angle plus a 30-jump DART replicate. RW-PD enters the
${\approx}180^\circ$ yaw-resonance mode at every angle and is significantly worse
than DART at $20^\circ$ ($p=6.5\times10^{-4}$), but not at $24^\circ$ or
$28^\circ$. DART and TOBB are statistically indistinguishable at $20^\circ$ and
$28^\circ$; TOBB outperforms DART at $24^\circ$ ($p=1.5\times10^{-5}$),
where DART itself partially enters the yaw-resonance mode. All laws complete
every condition without tumbling.

\begin{table}[!t]
\centering
\caption{Air-impulse head-to-head across takeoff angles under Condition~1,
with $N=30$ per law and angle. Each controller entry is median absolute
landing-pitch error ($^\circ$) [on-target/total] / median airborne yaw drift
($^\circ$); DART pool combines two 30-jump runs, and floor is their median
pitch-error gap ($^\circ$).}
\label{tab:headtohead}
\footnotesize
\setlength{\tabcolsep}{2.5pt}
\begin{tabular}{@{}lcccc@{}}
\toprule
$\alpha$ & DART pool & TOBB & RW-PD & floor \\
\midrule
$20^\circ$ & 3.7 [58/60] / 19 & 3.9 [29/30] / 20 & 17.5 [25/30] / 180 & 0.1 \\
$24^\circ$ & 8.7 [54/60] / 80 & 2.5 [29/30] / 23 & 9.4 [19/30] / 180 & 0.7 \\
$28^\circ$ & 13.1 [49/60] / 173 & 10.9 [23/30] / 180 & 19.8 [16/30] / 180 & 0.9 \\
\bottomrule
\end{tabular}
\end{table}

On mild entries, DART's fixed pitch gain shows no systematic advantage and
causes overshoot in 12/30 Condition-2 jumps; the per-flight latch removes
the cost of unnecessary roll engagement. Across 302 on-target landings,
peak-deceleration
bin medians rise from 14.1 to 15.1 to 19.0\,g as total attitude error crosses
5 and $15^\circ$, although the pooled rank correlation is weak
($r=0.11$, $p=0.065$).

\subsubsection{Approach Full-Chain Validation}
\label{sec:res-appr}
Approach mode builds the takeoff state through the ground run-up rather
than prescribing it at the lip. We compare the laws on straight and banked
run-ups, then separate airborne-law from gate contributions.

\begin{figure}[!t]
\centering
\includefig[width=\columnwidth]{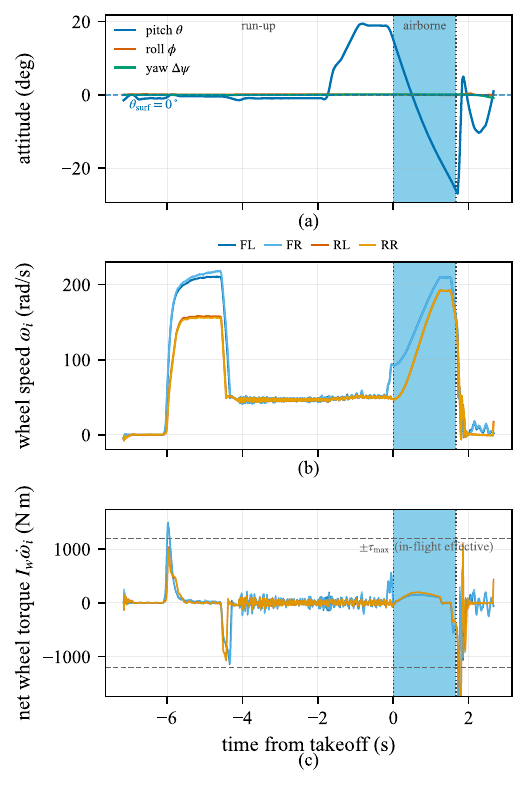}
\caption{Median-representative DART steep-lip jump corresponding to
Table~\ref{tab:appr}. (a)~Attitude, (b)~forward-positive wheel speeds, and
(c)~net per-wheel torque $I_w\dot\omega_i$, smoothed with a five-step moving average. Shading
marks the airborne interval.}
\label{fig:jumpts}
\end{figure}

Figure~\ref{fig:jumpts} traces one median-representative steep-lip jump.
The wheel-speed ramp near $-6$\,s is a slipping launch, so longitudinal slip
decouples wheel and vehicle speed. Launch wheelspin transiently exceeds the
1200-rpm sustained in-flight envelope while remaining below the 2105-rpm
drivetrain hard limit. In panel (c), $\pm\tau_{\max}$ constrains only the
airborne interval, where the motor is the sole torque source; launch and
touchdown excursions also contain tire--ground contact torque.
After launch wheelspin and speed hold, the abrupt end of lip support
imposes a $-31^\circ$/s
pitch rate; in-flight wheel spin-up relaxes it toward $-20^\circ$/s, and
the jump lands at the DART median error of $26.3^\circ$, while the baselines
land $12$--$16^\circ$ steeper. The remaining error reflects the cap the
momentum budget imposes and motivates pre-takeoff shaping.

\begin{table}[!t]
\centering
\caption{Approach-mode head-to-head on the $\alpha=20^\circ$ steep lip with
$N=30$. Safe-land counts use the crash-avoidance bounds of
Section~\ref{sec:setup}.}
\label{tab:appr}
\small
\setlength{\tabcolsep}{3.5pt}
\begin{tabular}{@{}lccc@{}}
\toprule
Controller & Safe-land & \(|\)pitch err\(|\) & \(|\)roll\(|\) \\
\midrule
DART  & 29/30 & 26.3\(^\circ\) & 0.1\(^\circ\) \\
TOBB  & 0/30  & 38.3\(^\circ\) & 0.1\(^\circ\) \\
RW-PD & 0/30  & 42.5\(^\circ\) & 0.4\(^\circ\) \\
\bottomrule
\end{tabular}
\end{table}

The straight-lane approach comparison in Table~\ref{tab:appr} is exactly
paired, with $N=30$ and $\alpha=20^\circ$. All controllers share the takeoff state
$(\theta_0,v_0,\omega_{\theta0})=(15.1^\circ,15.0\,\mathrm{m/s},
-31^\circ/\mathrm{s})$. DART reduces median pitch error to $26.3^\circ$ versus
$38.3^\circ$/$42.5^\circ$ (Wilcoxon $p\le2\times10^{-7}$) and achieves
29/30 safe landings versus 0/30 (McNemar $p=3.7\times10^{-9}$). With roll and yaw below
$1^\circ$, this isolates pitch-rate stress. In a gentler $\alpha=24^\circ$
companion experiment, all three controllers land safely, so kicker
severity, not angle alone, sets the regime.

\begin{table}[!t]
\centering
\caption{Cross-slope sweep with same-tick three-law launch at
$\alpha=20^\circ$ and $\gamma\in\{4,8,12\}^\circ$, using $N=30$ per law.}
\label{tab:appr-gamma-sweep}
\small
\setlength{\tabcolsep}{3pt}
\begin{tabular}{@{}ccccccc@{}}
\toprule
$\gamma$ & Law & Safe-land & \(|\)pitch\(|\) & \(|\)roll\(|\) & max roll air & $p_{\mathrm{vs\,DART}}$ \\
\midrule
\multirow{3}{*}{$4^\circ$}
 & DART  & 29/30 & 0.8\(^\circ\) & 4.8\(^\circ\) & 5.2\(^\circ\) & --- \\
 & TOBB  & 28/30 & 4.4\(^\circ\) & 4.6\(^\circ\) & 7.2\(^\circ\) & $<0.001$ \\
 & RW-PD & 28/30 & 3.6\(^\circ\) & 4.5\(^\circ\) & 8.2\(^\circ\) & $<0.001$ \\
\midrule
\multirow{3}{*}{$8^\circ$}
 & DART  & 30/30 & 2.0\(^\circ\) & 10.9\(^\circ\) & 10.9\(^\circ\) & --- \\
 & TOBB  & 29/30 & 2.6\(^\circ\) & 10.6\(^\circ\) & 11.1\(^\circ\) & 0.023 \\
 & RW-PD & 29/30 & 3.1\(^\circ\) & 7.9\(^\circ\) & 10.1\(^\circ\) & 0.007 \\
\midrule
\multirow{3}{*}{$12^\circ$}
 & DART  & 30/30 & 1.9\(^\circ\) & 10.9\(^\circ\) & 15.4\(^\circ\) & --- \\
 & TOBB  & 22/30 & 14.8\(^\circ\) & 12.4\(^\circ\) & 24.3\(^\circ\) & $<0.001$ \\
 & RW-PD & 27/30 & 14.7\(^\circ\) & 11.4\(^\circ\) & 24.7\(^\circ\) & $<0.001$ \\
\bottomrule
\end{tabular}
\end{table}

\begin{figure}[!t]
\centering
\includefig[width=\columnwidth]{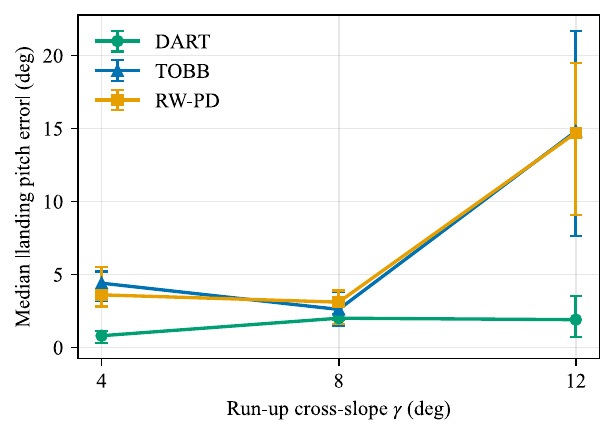}
\caption{Median absolute landing-pitch error versus run-up cross-slope
$\gamma$ under same-tick three-law launch with $N=30$ paired jumps per law
and cross-slope. Error bars span the interquartile range. DART stays near
$2^\circ$ across the sweep while both baselines
degrade sharply at $\gamma{=}12^\circ$, consistent with dual-axis allocation
once sustained roll-authority demand exceeds the pitch-only budget.}
\label{fig:gamma-sweep}
\end{figure}

We introduce lateral disturbances using three identical ramp copies and
run paired experiments at $\gamma\in\{4,8,12\}^\circ$ and $\alpha=20^\circ$
using $N=30$ per law. Table~\ref{tab:appr-gamma-sweep} reports all landed
trials. DART stays at or below $2^\circ$ median pitch error and outperforms
both baselines in Fig.~\ref{fig:gamma-sweep} at
every $\gamma$ ($p<0.05$); at $12^\circ$, TOBB's safe-landing count falls
to 22/30
and DART also reduces peak in-air roll ($p<10^{-4}$). A gate-off
$\gamma=8^\circ$ replicate remains statistically indistinguishable from the
gated configuration ($p=0.30$), attributing the pitch advantage primarily to
the airborne law.

\begin{table*}[!tp]
\centering
\caption{Geometric generalization under same-tick three-law launch with
$N=27$--$30$ valid paired jumps per geometry. Controller entries are median $|$pitch
error$|$ ($^\circ$); advantage $=$ best baseline $-$ DART.}
\label{tab:geomgen}
\begin{tabular}{@{}llccccccc@{}}
\toprule
Geometry & Mode & \(\theta_0\) (\(^\circ\)) & \(\omega_{\theta0}\) (\(^\circ\)/s) & Airtime (s) & DART (\(^\circ\)) & TOBB (\(^\circ\)) & RW-PD (\(^\circ\)) & advantage (\(^\circ\)) \\
\midrule
\(\alpha{=}10^\circ\) shallow & approach & 3.9 & \(-33\) & 1.2 & 24.0 & 29.6 & 34.0 & \(+5.6\) \\
\(\alpha{=}14^\circ\) nominal & approach & 7.3 & \(-34\) & 1.3 & 23.9 & 31.1 & 35.2 & \(+7.2\) \\
\(\alpha{=}22^\circ\) steep   & approach & 16.4 & \(-28.5\) & 1.7 & 22.1 & 34.0 & 36.9 & \(+11.9\) \\
\(\alpha{=}14^\circ\) tall    & approach & 7.0 & \(-36\) & 0.7 & 11.0 & 13.6 & 25.3 & \(+2.6\) \\
low launch      & air-impulse & 8.7 & \(\approx1\) & 1.3 & 1.1 & 6.2 & 7.4 & \(+5.2\) \\
nominal launch  & air-impulse & 8.7 & \(\approx1\) & 1.5 & 1.1 & 5.3 & 6.6 & \(+4.3\) \\
tall launch     & air-impulse & 8.7 & \(\approx1\) & 1.9 & 0.7 & 4.0 & 5.1 & \(+3.3\) \\
\bottomrule
\end{tabular}
\end{table*}

\subsubsection{Matched-Geometry Attribution}
Matching landing geometry, pitch, and speed while changing only takeoff
pitch-rate from $+12^\circ$/s under air impulse to $-31^\circ$/s under approach
increases DART error from $5.5^\circ$ to $23.3^\circ$, but degrades the
baselines from $7.7$--$13.0^\circ$ to $32$--$36^\circ$. The matched pair thus
attributes the larger approach separation to nose-down rate rather than
landing geometry.

\subsubsection{End-to-End Cross-Phase Chain on a Single Vehicle}
\label{sec:res-e2e}
The gate-off/on steep-lip experiment appears in Fig.~\ref{fig:e2e}. It uses
$N=30$ per configuration and
$v_{\mathrm{req}}\sim\mathcal{U}(21,23)$\,m/s. Gating reduces median
takeoff speed from 22.3 to 11.9\,m/s and touchdown speed by 36\%
($25.8\to16.5$\,m/s, $p=3\times10^{-11}$). It also raises on-target
landings from 0/30 to 30/30 ($p=1.7\times10^{-17}$). Both the gate-off and gate-on
configurations remain over the
pitch-rate budget, so this result establishes energy shaping and recovery
of on-target landings rather than full attitude nulling. Landing pitch
error, however, is \emph{larger} with the gate on: the medians are
$32.8^\circ$ and $18.1^\circ$, respectively. Traversing the lip more slowly prolongs the abrupt
front-axle unloading at the lip edge and hence its nose-down rotation,
roughly doubling the median takeoff pitch rate $|\omega_{\theta0}|$ from
$21$ to $43^\circ$/s while shortening the airtime available for correction
from $2.11$ to $1.46$\,s. The 47\% takeoff-speed
reduction exceeds the 36\% touchdown-speed reduction because
touchdown speed combines the braked horizontal component with a vertical
component set mainly by the fall geometry. Across the same experiments, the
median vertical touchdown speed falls only from $13.7$ to $11.8$\,m/s while
the flight time shortens from $2.11$ to $1.46$\,s; braking therefore acts
almost entirely on the horizontal component. The $36\%$ figure is accordingly a
property of this experiment's calibration, which is the ratio of the requested
approach speed to the certified $v_{\mathrm{crit}}$ of this geometry, rather than a
general constant; what transfers across geometries is the mechanism of
reducing the horizontal component to the certified window. The
one\hyb dimensional regulator certifies only the speed axis, leaving this
speed--rate coupling unmanaged and motivating a full-state $\Fset$ membership
check that would also gate on the predicted rate but is not evaluated here.

\looseness=-1
A companion experiment replays the same gate-off/on protocol on a curved-lip
kicker with a 35\,m arc at the same $\alpha=20^\circ$ and a
$\beta=-12^\circ$ valley landing. It uses $N=30$ per configuration to test
whether these findings are tied to the tabletop lip. Both principal effects transfer: gating reduces median
takeoff speed from $22.7$ to $13.9$\,m/s and touchdown speed by $33\%$
($21.7\to14.6$\,m/s, $p{=}1.4\times10^{-5}$), and the slower lip traversal
again amplifies the median takeoff pitch rate $|\omega_{\theta0}|$ from $21$ to
$40^\circ$/s while shortening the airtime from $2.4$ to $1.1$\,s. The
outcome of that coupling, however, reverses: under the gentler curved-lip
unloading the amplified takeoff rate remains within the in-flight control
authority at the certified speed, so the median landing-pitch error
improves from $11.4^\circ$ to $2.0^\circ$ ($p{=}1.1\times10^{-6}$) instead
of degrading, and both configurations stay within the crash-avoidance
bounds with 30/30 on-target landings. This outcome saturates on the
curved-lip geometry and therefore cannot discriminate between
configurations. The paired experiments thus separate the mechanism from
its consequence: takeoff-rate amplification is a construction-independent
cost of slower lip traversal, while its effect on
the landing is decided by whether the resulting takeoff state retains
sufficient in-flight authority, not by disturbance size alone.

\subsubsection{Budget-Interior Takeoff Control}
\label{sec:res-budgetin}
A complementary experiment places the takeoff \emph{inside} the budget using
an air-impulse launch at the same $\alpha{=}20^\circ$ severity, calibrated by a
small wheel-reaction pitch-rate kick to a median takeoff rate of $-9^\circ$/s,
within the $9$--$13^\circ$/s nose-up wheel-momentum budget. The same-tick
three-law experiment uses $N=30$ per law. All three laws collapse onto the same median
landing-pitch error of $2.7^\circ$ with 30/30 safe landings each, so the
cross-law advantage vanishes where wheel authority suffices for any law.
Together with the over-budget steep-lip experiments, this closes the two-sided
prediction of Theorem~\ref{thm:budget}. DART separates from the baselines
only when the takeoff pitch rate exceeds the certified budget.

\subsubsection{Per-Flight Roll-Latch Ablation}
\label{sec:res-adaptive}
\looseness=-1
Paired latch ablations use $N=30$ per condition and confirm the predicted
conditional allocation. On flat approach, DART reduces median pitch error from
$4.0^\circ$ with always-on roll to $1.3^\circ$
($p=6\times10^{-5}$); on the $1.2\times$ platform, from $4.1^\circ$ to
$2.8^\circ$ ($p=7\times10^{-8}$). Independent replicates confirm both
effects ($p\le1.4\times10^{-4}$); a lane-swap control confirms the same
direction ($p=2.5\times10^{-5}$). On the stressed $28^\circ$ and
yaw-resonance conditions, latch and always-on differences are unresolved, so
the latch improves the benign and heavy-mass conditions at no measured cost on
the stressed ones.

\subsubsection{Control-Law Ablations and Baseline Fairness}
At the stressed injection condition, dual/pitch-only/roll-only differences
remain
within the $3.3^\circ$ same-law noise floor; evidence for dual allocation
instead comes from the latch ablation and the $12^\circ$ banked runs.
Rate tracking and naive angle-PD have similar medians ($p=0.14$), but rate
tracking reduces worst-case overshoot from $-25.8^\circ$ to $-20.8^\circ$.
Across the tuning grid, DART holds $2.2^\circ$ while both tuned baselines
remain above $11.5^\circ$. The closed-form laws cost
$1.15$--$1.31\,\mu$s/step, over $7600\times$ faster than the 100\,Hz period.

\subsection{Generalization and Robustness}
\label{sec:res-generalization}
This tier varies one generalization axis at a time around the reference
configuration, considering geometry, controller gains, platform parameters
and mass, run-up surface, and takeoff-state jitter or actuator latency.

\subsubsection{Geometric Generalization}
\label{sec:res-geomgen}
Each law supplies $N=27$--$30$ valid paired jumps across seven
low-disturbance geometries. DART's advantage is positive in all seven, ranging from $+2.6$ to
$+11.9^\circ$. Table~\ref{tab:geomgen} and Fig.~\ref{fig:geomgen} show two
monotone trends: the advantage grows with takeoff severity and shrinks with
airtime. The smallest margin of $+2.6^\circ$ occurs on the tall approach
ramp. DART safe-landing counts range from 22/28 to 30/30 across the seven
geometries.

\begin{figure*}[!t]
\centering
\begin{minipage}[t]{0.49\textwidth}
\centering
\includefig[width=\linewidth]{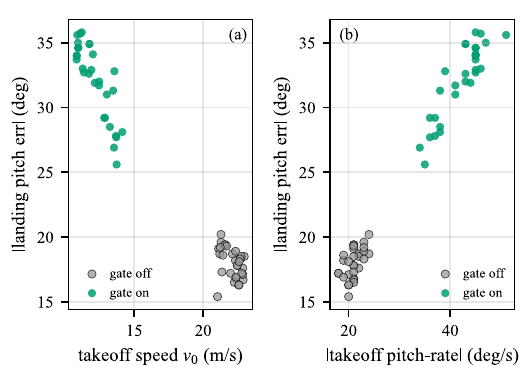}
\caption{End-to-end gate comparison on a single vehicle at the
$\alpha=20^\circ$ steep lip, with
$v_\mathrm{req}\sim\mathcal{U}(21,23)$\,m/s and $N=30$ per configuration.
Landing-pitch error versus (a)~takeoff speed
and (b)~takeoff pitch-rate; both configurations remain over budget.}
\label{fig:e2e}
\end{minipage}\hfill
\begin{minipage}[t]{0.49\textwidth}
\centering
\includefig[width=\linewidth]{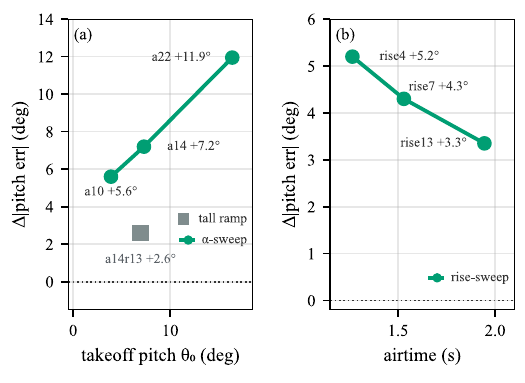}
\caption{Two monotone laws of the advantage from same-tick three-law
head-to-head with $N=27$--$30$ per geometry. (a)~Sweeping approach angle,
the advantage grows with takeoff severity $\theta_0$; the tall ramp, shown
as a gray square, remains positive
below the trend. (b)~Sweeping air-impulse launch height at fixed $\theta_0$,
the advantage shrinks monotonically with airtime.}
\label{fig:geomgen}
\end{minipage}
\end{figure*}

\subsubsection{Gain and Platform Robustness}
\label{sec:res-platform}
Table~\ref{tab:gain} sweeps each of DART's three gains over
\(0.5\times\)/\(1\times\)/\(2\times\) and keeps landing-pitch accuracy within
\(0.45\)--\(2.2^\circ\) across the grid, a spread below \(1.8^\circ\). These sweeps
assess robustness within DART only; cross-law comparisons are reported in
Section~\ref{sec:res-air}. As \(K_\phi\) increases, roll decreases
monotonically from \(23.5^\circ\) to \(10.4^\circ\). Raising \(K_\theta\)
borrows front-axle authority from roll, increasing it from
\(15.1^\circ\) to \(18.0^\circ\), directly
illustrating the co-allocation of
pitch and roll authority. Safe-landing ranges from 27/30 to 30/30 across
the seven settings. Here $K_d$ is the rate-loop torque gain, distinct from the platform torque bound
$\tau_{\max}$.

\begin{table}[!t]
\centering
\caption{Controller gain sensitivity with $N=30$ per setting; entries are
medians in degrees.}
\label{tab:gain}
\begin{tabular}{@{}lcc@{}}
\toprule
Gain swept & \(|\)pitch err\(|\) (0.5/1/2\(\times\)) & \(|\)roll\(|\) (0.5/1/2\(\times\)) \\
\midrule
\(K_\theta\) (pitch-rate) & 1.65 / \textbf{2.2} / 0.45 & 15.1 / 16.1 / 18.0 \\
\(K_\phi\) (roll-steer)   & 1.1 / \textbf{2.2} / 1.95 & 23.5 / 16.1 / 10.4 \\
\(K_d\) (torque)          & 2.2 / \textbf{2.2} / 2.2 & 16.1 / 16.1 / 15.8 \\
\bottomrule
\end{tabular}
\end{table}

Across motor torque, tire pressure, CoM, and pitch inertia, DART stays
within a $2^\circ$ spread and lands 29--30/30 safely, as
Table~\ref{tab:platform} shows. Insensitivity to motor torque confirms that the
momentum budget, not peak torque, sets the limit. In the always-on
mass ladder, pitch error rises from $2.1$ to $5.1$ to $8.8^\circ$ as mass
scales from $0.8$ to $1.0$ to $1.2\times$; the trend has $p<10^{-5}$.
Enabling the latch removes
the heavy-mass penalty. The torque variants scale the effective per-wheel
bound $\tau_{\max}$ by $0.6\times$ and $1.4\times$.

\begin{table}[!t]
\centering
\caption{Platform robustness with $N=30$ per configuration. DART and
best-baseline entries are median absolute landing-pitch errors; advantage is
best baseline minus DART. All values are in degrees. Slash-separated values follow the
listed variant order; Safe-land is safe/total, with ``each'' applying to both
paired variants.}
\label{tab:platform}
\footnotesize
\setlength{\tabcolsep}{2pt}
\begin{tabular}{@{}lcccc@{}}
\toprule
Variant & DART & best base. & adv. & Safe-land \\
\midrule
Nominal                 & 2.2\(^\circ\)  & 6.2\(^\circ\)  & \(+4.0^\circ\) & 30/30 \\
Torque 0.6\(\times\)/1.4\(\times\) & 0.9/2.2 & 6.35/5.85 & \(+5.4\)/\(+3.6\) & 30/30 each \\
Tire pressure low/high & 2.2/2.25    & 6.15/6.05    & \(+4.0\)/\(+3.8\) & 30/30 each \\
CoM forward            & 2.3         & 6.05         & \(+3.8\)      & 30/30 \\
Pitch-inertia high/low & 1.2/1.9     & 7.0/6.65     & \(+5.8\)/\(+4.8\) & 29/30 each \\
\bottomrule
\end{tabular}
\end{table}

\subsubsection{Run-Up Surface Robustness}
\label{sec:res-ground}
Under the approach protocol at $\alpha=20^\circ$, each always-on-roll
configuration uses $N=30$. Replacing the asphalt run-up with gravel or dirt
leaves landing accuracy statistically unchanged for all three laws. DART's
median spans $3.1$--$5.1^\circ$, with $29$--$30/30$ on-target per surface.
Takeoff-state
shaping by
the reachability gate absorbs the surface-dependent traction differences, so
the airborne comparison is not confounded by run-up surface.

\looseness=-1
A companion landing-surface experiment repeats the $\alpha{=}20^\circ$
air-impulse head-to-head with an identical curved lip. Its takeoff state is
$\theta_0=11.6^\circ$ and $\omega_{\theta0}=+12^\circ$/s, while the
landing valley is flattened from $\beta{=}{-}12^\circ$ to
$\beta{=}{-}2^\circ$. All three laws then land 30/30 with medians of
$1.0$--$1.7^\circ$, versus $3.7^\circ$ for DART and $17.5^\circ$ for RW-PD on
the $-12^\circ$ valley: a near-level target renders attitude tracking
undemanding for every law, and the cross-law separation vanishes. The
valley slope, not the landing surface itself, is therefore what keeps
attitude tracking demanding enough to separate the laws; the flat-floor
landings in Tables~\ref{tab:geomgen}--\ref{tab:platform} replicate these
conclusions on level terrain.

\begin{table*}[!tp]
\centering
\caption{Qualitative disturbance-regime map.}
\label{tab:regime}
\begin{tabular}{@{}>{\raggedright\arraybackslash}p{0.115\textwidth}>{\raggedright\arraybackslash}p{0.15\textwidth}>{\raggedright\arraybackslash}p{0.33\textwidth}>{\raggedright\arraybackslash}p{0.27\textwidth}@{}}
\toprule
Regime & Dominant stress & Observed airborne outcome & Primary evidence \\
\midrule
A: pitch-rate & large sustained $|\omega_{\theta0}|$ & DART $\gg$ RW-PD in pitch & appr.\ $\alpha{=}20^\circ$ steep lip \\
\addlinespace
B: dual-axis & large roll and pitch-rate & DART $\gg$ RW-PD at $20^\circ$; TOBB leads at $24^\circ$; DART/TOBB converge at $28^\circ$ & air $\alpha{\in}\{20,24,28\}^\circ$; Tab.~\ref{tab:headtohead} \\
\addlinespace
C: banked run-up & roll from $\gamma$ & DART $>$ both baselines in pitch at every $\gamma$; gap largest at $\gamma{=}12^\circ$ & appr.\ $\gamma{\in}\{4,8,12\}^\circ$; Tab.~\ref{tab:appr-gamma-sweep} \\
\addlinespace
D: benign & small roll, mild pitch-rate & latch improves flat/heavy conditions; all laws are safe on the gentle valley & flat appr.\ at $\alpha{=}20^\circ$ / gentle valley at $\alpha{=}24^\circ$; Sec.~\ref{sec:res-adaptive} \\
\midrule
\multicolumn{4}{@{}l}{\textit{Cross-phase (all regimes): reachability gate independent of the airborne row}} \\
Gate / shaping & $v_0$ vs.\ landing geometry & $-36\%$ touchdown speed; $0/30\to30/30$ on-target & Section~\ref{sec:res-e2e} \\
\bottomrule
\end{tabular}
\end{table*}

\subsubsection{Takeoff Jitter and Actuator Latency}
\label{sec:res-jitterlat}
Sweeping jitter over $\sigma\in\{0.25,1,2,3,5\}$ and latency over
$\{0,10,20,40\}$\,ms with $N=30$ per setting changes pitch medians
moderately but exposes a low-dispersion yaw resonance. In the interleaved latency sweep,
RW-PD reaches ${\approx}180^\circ$ yaw and 17--21/30 on-target landings,
TOBB 29--30/30, and DART lies between them. Exposure therefore tracks
steering usage rather than latency alone; the per-flight latch removes this
exposure in the deployment-relevant flat and heavy-mass conditions.

\subsection{Regime Synthesis and Operating Boundaries}
\label{sec:res-regime}
The preceding experiments support the regime synthesis in
Table~\ref{tab:regime}, which maps takeoff severity to DART-to-baseline
separation across the cross-phase chain and the airborne law.

Regime~D matches Theorem~\ref{thm:frontier} because, when roll does not bind,
the latch restores pitch-only allocation and avoids unnecessary yaw coupling.

\subsubsection{Operating Boundaries}
\label{sec:res-limits}
Four boundaries delimit the tested range: (i)~beyond
$B_\uparrow\approx13^\circ$/s, every law retains large steep-lip residual;
(ii)~long airtime allows the baselines to approach DART's accuracy;
(iii)~mild entries can expose fixed-gain overshoot; and
(iv)~steer-vectored roll corrections can accumulate same-sign yaw impulses
that feed the yaw-resonance mode. These require, respectively, pre-takeoff shaping, no special
intervention, gain scheduling, and conditional roll allocation. No abrupt
tumbling transition appears within the tested envelope.

\section{Discussion and Limitations}
\label{sec:limits}

\phantomsection\label{sec:practicality}%
Among the practical constraints on deploying DART, the most fundamental is
vehicle scale. At 1383\,kg the wheels can supply only \(9\)--\(13^\circ\)/s of nose-up
body pitch rate at typical takeoff wheel speeds. The reverse-inclusive
braking direction affords roughly twice as much, as
Theorem~\ref{thm:budget} shows, while the drivetrain hard limit raises the
nose-up ceiling to no more than $16$--$18^\circ$/s. Because the budget
scales as \(B\propto1/m\) under fixed wheel inertias, this authority
tightens further as vehicles grow heavier. Torque sweeps leave landing
accuracy nearly unchanged, which confirms that stored angular momentum, not
peak actuation, sets the limit. Airborne control is therefore
best understood as a trimming layer for takeoff states already near
\(\Fset\); disturbances beyond the budget must be handled before the wheels
leave the ground. The pitch budget must be re-identified for each vehicle
class, and the steer-vectored roll channel carries a limitation of its own,
yaw drift, so hardware transfer entails budget identification, calibration
of \(\mathcal B_\rho\), and staged validation on reduced-scale platforms.

\phantomsection\label{sec:honest-boundaries}%
Even inside the certified envelope, three residual costs shape deployment.
Spending wheel momentum in flight changes the touchdown wheel speed,
coupling attitude control to the landing itself; a fixed pitch gain tuned
for stressed entries can overshoot mild ones; and steer-vectored roll
correction leaks into the unactuated yaw axis. Hardware transfer therefore
requires an actuator--landing interaction model, state-dependent pitch gain
scheduling, and yaw-loop phase compensation. The per-flight roll latch
already suppresses unnecessary roll actuation on benign entries and heavy
platforms.

\phantomsection\label{sec:terrain-req}%
The framework's terrain requirements are correspondingly layered. Correcting
the pitch rate needs no terrain information; steering the attitude
toward its target requires only the mean landing slope over a vehicle
footprint; and certifying feasibility requires the landing-zone location and
drop at planning resolution, but never in-flight surface friction. As the
landing estimate becomes less certain, \(\Fset\) erodes and the gate's
decision shifts conservatively toward no-go.

\phantomsection\label{sec:uncertainty}%
A symmetric question arises on the takeoff side. On hardware,
state-estimation error, actuator latency, tire slip, and suspension scatter
together populate the uncertainty ball \(\mathcal B_\rho\), and the eroded
set \(\Fset\ominus\mathcal B_\rho\) supplies the margin: if it is
empty, the vehicle must shape earlier or abort. Estimating \(\rho\) and
validating this margin are the primary sim-to-real tasks.

\phantomsection\label{sec:future-expts}%
The next validation steps are online full-state \(\Fset\) membership during
approach, since the evaluated regulator certifies the speed axis only, and
yaw-loop compensation for the residual resonance.
Reduced-scale four-wheel-drive experiments following XCar~\cite{xcar2025iwd}
are the natural next step, preceding full-vehicle identification.

\section{Conclusion}
\label{sec:conclusion}

We presented DART, a cross-phase attitude-control framework for high-speed
vehicle jumps with distributed electric drive, organized around the
angular-momentum budget that caps recoverable in-flight pitch rate. At
1383\,kg, the nose-up budget is \(\approx 9\)--\(13^\circ\)/s and
tightens with mass. The
framework combines backward-reachability pre-takeoff speed shaping with a
regime-conditional dual-axis in-flight law, validated by paired
deterministic simulation with approach-mode run-ups.

End-to-end, pre-takeoff speed shaping reduced touchdown speed by 36\% and
raised on-target landings from 0/30 to 30/30 in the evaluated steep-lip
scenario, while the gate refused the jump and stopped the vehicle
before it reached the lip when the certified speed window was empty. Under
stressed entries, DART satisfies the crash-avoidance landing bounds where
RW-PD and TOBB fail (29/30 vs.\ 0/30 on steep-lip approach) and holds the
median pitch error at or below $2^\circ$ over banked run-ups, with the widest
separation from the baselines at $\gamma{=}12^\circ$.

The airborne advantage is regime dependent. Engaged roll control serves
dual-axis disturbances, whereas the per-flight latch restores pitch-only
behavior on benign entries and limits yaw-leak exposure. RW-PD's continuous
counter-steer can feed the resonance mode. On the pure-attitude authority
ladder, pitch residual stays below \(\approx 3^\circ\). Peak-deceleration bin
medians increase with touchdown attitude error, although the pooled rank
correlation is weak. Rate errors exceeding the directional budgets
$B_\uparrow$ or $B_\downarrow$ by more than the terminal-window half-width
are provably unrecoverable in flight.

Future work will add takeoff-state gain scheduling and proceed from
reduced-scale hardware validation to full-vehicle identification of the
pitch-rate budget at each vehicle scale.

\section*{Acknowledgments}
During the preparation and technical implementation of this work, the
authors utilized a combination of generative artificial intelligence (AI)
systems in an integrated workflow. Specifically, Auto/Composer 2.5 (Cursor),
GLM 5.2, DeepSeek V3/V4, and Kimi K3 were used interactively to assist in developing,
debugging, and optimizing the algorithmic source code and data-processing
pipelines discussed in Section~\ref{sec:method}. These same systems
(GLM 5.2, DeepSeek V3/V4 and Kimi K3) were also co-employed to generate the analytical
visualization scripts for creating the experimental data charts in
Section~\ref{sec:results}. During the manuscript revision phase, GLM 5.2, DeepSeek V3/V4
and Kimi K3 refined English prose, enhanced language fluency, and polished
the text across all sections. In accordance with IEEE guidelines, all
AI-assisted engineering code, data visualizations, and textual outputs were
thoroughly reviewed, logically validated, and cross-checked against raw
experimental results by the human authors, who retain full academic and
ethical responsibility for the content and integrity of this publication.

\balance

\end{document}